\theoremstyle{plain}
\newtheorem{theorem}{\textbf{Theorem}}
\newtheorem{lemma}{Lemma}
\newtheorem{corollary}{\textbf{Corollary}}
\newtheorem{proposition}{\textbf{Proposition}}
\newcommand{\norm}[1]{\left\lVert#1\right\rVert} 
\icmltitlerunning{Efficient Orthogonal Parametrisation of Recurrent Neural Networks}
\begin{document} 
	
	\twocolumn[
	\icmltitle{Efficient Orthogonal Parametrisation of Recurrent Neural Networks \\ Using Householder Reflections}

\icmlsetsymbol{equal}{*}

\begin{icmlauthorlist}
	\icmlauthor{Zakaria Mhammedi}{goo,eg}
	\icmlauthor{Andrew Hellicar}{eg}
	\icmlauthor{Ashfaqur Rahman}{eg}
	\icmlauthor{James Bailey}{goo}
\end{icmlauthorlist}

\icmlaffiliation{eg}{Data61, CSIRO, Australia}
\icmlaffiliation{goo}{The University of Melbourne, Parkville, Australia}

\icmlcorrespondingauthor{Zakaria Mhammedi}{zak.mhammedi@data61.csiro.au}




	\vskip 0.3in
	]
	
	\printAffiliationsAndNotice{} %

	\begin{abstract}
	The problem of learning long-term dependencies in sequences using Recurrent Neural Networks (RNNs) is still a major challenge. Recent methods have been suggested to solve this problem by constraining the transition matrix to be unitary during training which ensures that its norm is equal to one and prevents exploding gradients. These methods either have limited expressiveness or scale poorly with the size of the network when compared with the simple RNN case, especially when using stochastic gradient descent with a small mini-batch size. Our contributions are as follows; we first show that constraining the transition matrix to be unitary is a special case of an orthogonal constraint. Then we present a new parametrisation of the transition matrix which allows efficient training of an RNN while ensuring that the matrix is always orthogonal.  
	Our results show that the orthogonal constraint on the transition matrix applied through our parametrisation gives similar benefits to the unitary constraint, without the time complexity limitations. 
	\end{abstract}

	\section{\textbf{Introduction}}
	
	Recurrent Neural Networks (RNNs) have been successfully used in many applications involving time series. This is because RNNs are well suited for sequential data as they process inputs one element at a time and store relevant information in their hidden state. In practice, however, training simple RNNs (sRNN) can be challenging due to the problem of exploding and vanishing gradients \cite{hochreiter2001gradient}. It has been shown that exploding gradients can occur when the transition matrix of an RNN has a spectral norm larger than one \cite{glorot2010understanding}. This results in an error surface, associated with some objective function, having very steep walls \cite{pascanu2013difficulty}. On the other hand, when the spectral norm of the transition matrix is less than one, the information at one time step tend to vanish quickly after a few time steps. This makes it challenging to learn long-term dependencies in sequential data. 
	
	Different methods have been suggested to solve either the vanishing or exploding gradient problem. The LSTM has been specifically designed to help with the vanishing gradient \cite{Hochreiter1997}. This is achieved by using gate vectors which allow a linear flow of information through the hidden state. 
	However, the LSTM does not directly address the exploding gradient problem. One approach to solving this issue is to clip the gradients \cite{mikolov2012statistical} when their norm exceeds some threshold value. However, this adds an extra hyperparameter to the model. Furthermore, if exploding gradients can occur within some parameter search space, the associated error surface will still have steep walls. This can make training challenging even with gradient clipping. 
	
	Another way to approach this problem is to improve the shape of the error surface directly by making it smoother, which can be achieved by constraining the spectral norm of the transition matrix to be less than or equal to one. However, a value of exactly one is best for the vanishing gradient problem. A good choice of the activation function between hidden states is also crucial in this case. These ideas have been investigated in recent works. In particular, the unitary RNN \cite{arjovsky2015unitary} uses a special parametrisation to constrain the transition matrix to be unitary, and hence, of norm one. This parametrisation and other similar ones \cite{hyland2016learning,wisdom2016full} have some advantages and drawbacks which we will discuss in more details in the next section. 
	
	The main contributions of this work are as follows: 
	\begin{itemize}
		\item We first show that constraining the search space of the transition matrix of an RNN to the set of unitary matrices $\mathbf{U}(n)$ is equivalent to limiting the search space to a subset of $\mathbf{O}(2n)$ ($\mathbf{O}(2n)$ is the set of $2n \times 2n$ orthogonal matrices) of a new RNN with twice the hidden size. This suggests that it may not be necessary to work with complex matrices.  
		
		\item We present a simple way to parametrise orthogonal transition matrices of RNNs using Householder matrices, and we derive the expressions of the back-propagated gradients with respect to the new parametrisation. This new parametrisation can also be used in other deep architectures.    
		
		\item We develop an algorithm to compute the back-propagated gradients efficiently. Using this algorithm, we show that the worst case time complexity of one gradient step is of the same order as that of the sRNN.  
		
	\end{itemize}

	\section{\textbf{Related Work}}
	\label{se1}
	Throughout this work we will refer to elements of the following sRNN architecture.
	\begin{align}
	\label{eq15}
	h^{(t)} &= \phi(W h^{(t-1)} + V x^{(t)}),\\
	o^{(t)} &= Y h^{(t)},
	\end{align}
	where $W$, $V$ and $Y$ are the hidden-to-hidden, input-to-hidden, and hidden-to-output weight matrices. $h^{(t-1)}$ and $h^{(t)}$ are the hidden vectors at time steps $t-1$ and $t$ respectively. Finally, $\phi$ is a non-linear activation function. We have omitted the bias terms for simplicity.
	
	Recent research explored how the initialisation of the transition matrix $W$ influences training and the ability to learn long-term dependencies. In particular, initialisation with the identity or an orthogonal matrix can greatly improve performance \cite{le2015simple}. In addition to these initialisation methods, one study also considered removing the non-linearity between the hidden-to-hidden connections \cite{henaff2016orthogonal}, i.e. the term $W h^{(t-1)}$ in Equation \eqref{eq15} is outside the activation function $\phi$. This method showed good results when compared to the LSTM on pathological problems exhibiting long-term dependencies.  
	
	After training a model for a few iterations using gradient descent, nothing guarantees that the initial structures of the transition matrix will be held. In fact, its spectral norm can deviate from one, and exploding and vanishing gradients can be a problem again. It is possible to constrain the transition matrix to be orthogonal during training using special parametrisations \cite{arjovsky2015unitary, hyland2016learning}, which ensure that its spectral norm is always equal to one. 
	The unitary RNN (uRNN) \cite{arjovsky2015unitary} is one example where the hidden matrix $W \in \mathbb{C}^{n \times n}$ is the product of elementary matrices, consisting of reflection, diagonal, and Fourier transform matrices. When the size of hidden layer is equal to $n$, the transition matrix has a total of only $7n$ parameters. Another advantage of this parametrisation is computational efficiency - the matrix-vector product $Wv$, for some vector $v$, can be calculated in time complexity $\mathcal{O}(n \log n)$. However, it has been shown that this parametrisation does not allow the transition matrix to span the full unitary group \cite{wisdom2016full} when the size of the hidden layer is greater than 7. This may limit the expressiveness of the model. 
	
	Another interesting parametrisation \cite{hyland2016learning} has been suggested which takes advantage of the algebraic properties of the unitary group $\mathbf{U}(n)$. The idea is to use the corresponding matrix Lie algebra $\mathbf{u}(n)$ of skew hermitian matrices. In particular, the transition matrix can be written as $W = \exp\left[\sum_{i=1}^{n^2} \lambda_i T_i   \right]$,
	where $\exp$ is the exponential matrix map and $\{T_i\}^{n^2}_{i=1}$ are predefined $n\times n$ matrices forming a bases of the Lie algebra $\mathbf{u}(n)$. The learning parameters are the weights $\{\lambda_i\}$. The fact that the matrix Lie algebra $\mathbf{u}(n)$ is closed and connected ensures that the exponential mapping from $\mathbf{u}(n)$ to  $\mathbf{U}(n)$ is surjective. Therefore, with this parametrisation the search space of the transition matrix spans the whole unitary group. This is one advantage over the original unitary parametrisation \cite{arjovsky2015unitary}. However, the cost of computing the matrix exponential to get $W$ is $\mathcal{O}(n^3)$, where $n$ is the dimension of the hidden state. . 
	
	Another method \cite{wisdom2016full} performs optimisation directly of the Stiefel manifold using the Cayley transformation. The corresponding model was called full-capacity unitary RNN. Using this approach, the transition matrix can span the full set of unitary matrices. However, this method involves a matrix inverse as well as matrix-matrix products which have time complexity $\mathcal{O}(n^3)$. This can be problematic for large neural networks when using stochastic gradient descent with a small mini-batch size. 
	
	A more recent study \cite{vorontsov2017orthogonality} investigated the effect of soft versus hard orthogonal constraints on the performance of RNNs. The soft constraint was applied by specifying an allowable range for the maximum singular value of the transition matrix. To this end, the transition matrix was factorised as $W= USV'$, where $U$ and $V$ are orthogonal matrices and $S$ is a diagonal matrix containing the singular values of $W$. A soft orthogonal constraint consists of specifying small allowable intervals around 1 for the diagonal elements of $S$. Similarly to \cite{wisdom2016full}, the matrices $U$ and $V$ were updated at each training iteration using the Cayley transformation, which involves a matrix inverse, to ensure that they remain orthogonal. 
	
	All the methods discussed above, except for the original unitary RNN, involve a step that requires at least a $\mathcal{O}(n^3)$ time complexity. All of them, except for one, require the use of complex matrices. Table \ref{t4} summarises the time complexities of various methods, including our approach, for one stochastic gradient step. In the next section, we show that imposing a unitary constraint on a transition matrix $W \in \mathbb{C}^{n \times n}$ is equivalent to imposing a special orthogonal constraint on a new RNN with twice the hidden size. Furthermore, since the norm of orthogonal matrices is also always one, using the latter has the same theoretical benefits as using unitary matrices when it comes to the exploding gradient problem. 
	\begin{table*}
		\centering
		\begin{tabular}{cccc}
			\hline
			Methods    & Constraint on the&   Time complexity of one & Search space of the   \\
			& transition matrix & online gradient step & transition matrix \\
			\hline
			uRNN \cite{arjovsky2015unitary} & $\norm{W}=1 $& $\mathcal{O}(Tn\log(n))$ & A subset of  $\mathbf{U}(n)$   \\
			& & & when $n > 7$\\
			\hline
			Full-capacity uRNN  & $\norm{W}=1 $& $\mathcal{O}(T n^2 + n^3)$ & The full $\mathbf{U}$(n) set  \\
			\cite{wisdom2016full}  & &  &\\
			\hline 
			Unitary RNN  &$ \norm{W}= 1$ & $\mathcal{O}(T n^2 + n^3)$ & The full $\mathbf{U}$(n) set \\
			\cite{hyland2016learning} & & &\\
			\hline
			\hline
			oRNN & $\norm{W}= 1$ &$\mathcal{O}(T nm)$ &  The full $\mathbf{O}$(n) set\\
			(Our approach) & &   where $m \leq n$ &  when $m=n$ \\
			\hline
		\end{tabular}
		\caption{Table showing the time complexities associated with one stochastic gradient step (mini-batch size =1) for different methods, when the size of the hidden layer of an RNN is $n$ and the input sequence has length $T$. }
			\label{t4}
	\end{table*}

	\section{\textbf{Complex unitary versus orthogonal}}
	\label{se2}
	
	We can show that when the transition matrix $W\in \mathbb{C}^{n \times n}$ of an RNN is unitary, there exists an equivalent representation of this RNN involving an orthogonal matrix $\hat{W} \in \mathbb{R}^{2n \times 2n}$.
	
	In fact, consider a complex unitary transition matrix $W = A + i B \in \mathbb{C}^{n \times n}$, where A and B are now real-valued matrices in $\mathbb{R}^{n \times n}$. We also define the following new variables
	\begin{align*}
	\begin{array}{cc}
	\forall t, \hat{h}^{(t)} = \begin{bmatrix}
	\Re \left(h^{(t)}\right)\\
	\Im \left(h^{(t)}\right) 
	\end{bmatrix}, 
	\hat{V}  = \begin{bmatrix}
	\Re \left(V\right)\\  
	\Im \left(V\right) 
	\end{bmatrix},   
	\hat{W} = \begin{bmatrix}
	A  & -B\\
	B & A
	\end{bmatrix} ,
	\end{array}
	\end{align*}
	where $\Re$ and $\Im$ denote the real and imaginary parts of a complex number. Note that $ \hat{h}^{(t)} \in \mathbb{R}^{2n}$, $\hat{W} \in \mathbb{R}^{2n \times 2n}$, and $\hat{V}  \in \mathbb{R}^{2n \times n_x}$, where $n_x$ is the dimension of the input vector $x^{(t)}$ in Equation \eqref{eq15}.
	
	Assuming that the activation function $\phi$ applies to the real and imaginary parts separately, it is easy to show that the update equation of the complex hidden state $h^{(t)}$ of the unitary RNN has the following real space representation
	\begin{align}
	\label{eq:9}
	\hat{h}^{(t)} &= \phi(\hat{W} \hat{h}^{(t-1)} + \hat{V} x^{(t)}).
	\end{align}
	Even when the activation function $\phi$ does not apply to the real and imaginary parts separately, it is still possible to find an equivalent representation in the real space. Consider the activation function proposed by \cite{arjovsky2015unitary} 
	\begin{align}
	\label{eq104}
	\sigma_{\mbox{modRelU}} (z) = \left\{  \begin{matrix} (|z| +b ) \frac{z}{|z|}, & \mbox{if}\; |z| + b >0 \\ 0, & \mbox{otherwise}\; \end{matrix} \right.
	\end{align} 
	where $b$ is a bias vector.
	For a hidden state $\hat{h} \in \mathbb{R}^{2n} $, the equivalent activation function in the real space representation is given by  
	\begin{align*}
	\left[\hat{\phi}(a)\right]_i = \left\{  \begin{matrix} \frac{\sqrt{a_i^2 + a_{k_i}^2} + b_{\tilde{k}_i}}{\sqrt{a_i^2 + a_{k_i}^2}}  a_i, & \mbox{if}\; \sqrt{a_i^2 + a_{k_i}^2} + b_{\tilde{k}_i} >0 \\ 0, & \mbox{otherwise} \end{matrix} \right.
	\end{align*} 
	where $k_i =  \left((i + n) \bmod 2n\right) $ and $\tilde{k}_i=\left(i \bmod n\right)$ for all $i \in \{1,\dots, 2n \}$. The activation function $\hat{\phi}$ is no longer applied to hidden units independently.
	
	Now we will show that the matrix $\hat{W}$ is orthogonal.
	By definition of a unitary matrix, we have $W^{} W^* = I $ 
	where the ${}^*$  represents the conjugate transpose. This implies that 
	$ A A' + B B' = I$ and $BA' - AB' = 0$. And since we have 
	\begin{align}
	\hat{W} \hat{W}'   = \begin{bmatrix}
	A A' + B B' & AB' - BA' \\
	BA' - AB' & A A' + B B'
	\end{bmatrix},
		\end{align} 
	it follows  that $\hat{W} \hat{W}' = I$. Also note that $\hat{W}$ has a special structure - it is a block-matrix. 
	
	The discussion above shows that using a complex, unitary transition matrix in $\mathbb{C}^{n \times n}$ is equivalent to using an orthogonal matrix, belonging to a subset of $\mathbf{O}(2n)$, in a new RNN with twice the hidden size. This is why in this work we focus mainly on parametrising orthogonal matrices. 
	
	\section{\textbf{Parametrisation of the transition matrix}}
	Before discussing the details of our parametrisation, we first introduce a few notations. For $n, k\in \mathbb{N}$ and $ 2 \leq k \leq n $, let $\mathcal{H}_k: \mathbb{R}^{k} \rightarrow \mathbb{R}^{n \times n}$ be defined as
		\begin{equation}
		\label{eq:19}
		\begin{aligned}
		\mathcal{H}_k(\mathbf{u}) =  \begin{bmatrix} I_{n-k} & 0 \\  0 &  I_k - 2 \frac{\mathbf{u} \mathbf{u}'}{\norm{\mathbf{u}}^2}   \end{bmatrix},
		\end{aligned}
		\end{equation}
		where $I_k$ denotes the $k$-dimensional identity matrix. For $\mathbf{u} \in \mathbb{R}^{k}$, $\mathcal{H}_k(\mathbf{u})$ is the \textit{Householder Matrix} in $\mathbf{O}(n)$ representing the reflection about the hyperplane orthogonal to the vector $(\mathbf{0}'_{n-k},\mathbf{u}')'\in \mathbb{R}^n$ and passing through the origin, where $\mathbf{0}_{n-k}$ denotes the zero vector in $\mathbb{R}^{n-k}$. 
		
		 We also define the mapping $\mathcal{H}_1: \mathbb{R} \rightarrow \mathbb{R}^{n\times n}$ as 
		\begin{equation}
		\label{eq:18}
		\begin{aligned}
		\mathcal{H}_1(u)=\begin{bmatrix} I_{n-1} & 0 \\  0 & u \end{bmatrix}.
		\end{aligned}
		\end{equation}
		
	 Note that $\mathcal{H}_{1}(u)$ is not necessarily a Householder reflection. However, when $u \in \{1, -1\}$, $\mathcal{H}_1(u)$ is orthogonal. 
	
	Finally, for $n,k\in \mathbb{N}$ and $1 \leq k \leq n$, we define  
	\begin{align*}
	\mathcal{M}_k: \mathbb{R}^k\times \dots \times \mathbb{R}^n &\rightarrow \mathbb{R}^{n \times n}\\
	(\mathbf{u}_k, \dots, \mathbf{u}_n ) &\mapsto  \mathcal{H}_n(\mathbf{u}_n) \dots \mathcal{H}_{k}(\mathbf{u}_{k}).
	\end{align*}

	 We propose to parametrise the transition matrix $W$ of an RNN using the mappings $\{\mathcal{M}_k\}$. When using $m$ \textit{reflection} vectors $\{\mathbf{u}_i\}$, the parametrisation can be expressed as 
	 \begin{align}
	 W &= \mathcal{M}_{n-m+1}(\mathbf{u}_{n-m+1}, \dots, \mathbf{u}_n) \nonumber \\ 
	 &=  \mathcal{H}_{n}(\mathbf{u}_n)  \dots \mathcal{H}_{n-m+1}(\mathbf{u}_{n-m+1}),   \label{eq12}
	 \end{align}
	 where $\mathbf{u}_i \in \mathbb{R}^{i}$ for $i \in \{n-m+1,\dots, n\}$.
	  
	  For the particular case where $m=n$ in the above parametrisation, we have the following result.
	\begin{theorem}
		\label{t2} 
		The image of $\mathcal{M}_1$ includes the set of all $n\times n$ orthogonal matrices, i.e. $\mathbf{O}(n) \subset \mathcal{M}_1 [\mathbb{R}\times \dots \times \mathbb{R}^n]$.
	\end{theorem}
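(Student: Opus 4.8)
The plan is to prove the inclusion constructively: given an arbitrary $W \in \mathbf{O}(n)$, I would exhibit reflection vectors $u_1 \in \mathbb{R}$ and $\mathbf{u}_i \in \mathbb{R}^i$ for $i \in \{2,\dots,n\}$ such that $W = \mathcal{H}_n(\mathbf{u}_n)\cdots \mathcal{H}_2(\mathbf{u}_2)\,\mathcal{H}_1(u_1) = \mathcal{M}_1(u_1,\mathbf{u}_2,\dots,\mathbf{u}_n)$. This is essentially the Householder (QR) reduction adapted to the nested block structure of the maps $\mathcal{H}_k$. First I would record two elementary facts. Each genuine Householder matrix $\mathcal{H}_k(\mathbf{u})$ is symmetric and orthogonal, hence an involution, so $\mathcal{H}_k(\mathbf{u})^{-1} = \mathcal{H}_k(\mathbf{u})$; this lets me move reflections freely from one side of an equation to the other. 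Second, for any two distinct unit vectors $v,w \in \mathbb{R}^k$, the Householder reflection whose defining vector is $v-w$ maps $v$ to $w$, since a one-line computation gives $2\,(v-w)'v/\norm{v-w}^2 = 1$.

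Next I would run the reduction as a downward induction on the block index $k$, from $k=n$ to $k=1$, maintaining the invariant that after applying $\mathcal{H}_k(\mathbf{u}_k)\cdots\mathcal{H}_n(\mathbf{u}_n)$ to $W$ the result is block-diagonal, consisting of an $(n-k+1)$-dimensional identity block followed by a block $W_{k-1}\in\mathbf{O}(k-1)$. Since $\mathcal{H}_k(\mathbf{u}_k)$ acts as the identity on the first $n-k$ coordinates and as a genuine $k\times k$ Householder reflection on the remaining block, I can use the lemma above to choose $\mathbf{u}_k$ so that this reflection sends the first column of the current $k\times k$ orthogonal block to $e_1 \in \mathbb{R}^k$. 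Orthogonality of the resulting matrix then forces the corresponding row to vanish off the diagonal, peeling off one further identity coordinate and leaving an orthogonal block in $\mathbf{O}(k-1)$, which reestablishes the invariant. Iterating down to $k=2$ leaves a $1\times 1$ orthogonal block, i.e.\ a scalar $u_1 \in \{1,-1\}$, which is exactly $\mathcal{H}_1(u_1)$. Invoking the involution property to return all the reflections to the right-hand side then yields $W = \mathcal{M}_1(u_1,\mathbf{u}_2,\dots,\mathbf{u}_n)$, so $W \in \mathcal{M}_1[\mathbb{R}\times\dots\times\mathbb{R}^n]$.

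I expect the main obstacle to be the degenerate case of the inductive step in which the first column of the current block already equals $e_1$: then $v-w = 0$, the reflection vector is undefined, and one cannot simply skip the step because a genuine reflection can never equal the identity. I would resolve this by instead choosing $\mathbf{u}_k$ with vanishing first component, so that $\mathcal{H}_k(\mathbf{u}_k)$ fixes $e_1$ and acts only on the last $k-1$ coordinates; this still peels off one identity coordinate while absorbing the spurious reflection harmlessly into the remaining block, which the later steps continue to reduce. The second point worth emphasising is the role of the non-reflection map $\mathcal{H}_1$: the product of the $n-1$ genuine reflections $\mathcal{H}_2,\dots,\mathcal{H}_n$ has a determinant of fixed sign, whereas $\mathbf{O}(n)$ contains matrices of both determinants, so the final scalar block may be $-1$. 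The freedom to take $u_1 \in \{1,-1\}$ in $\mathcal{H}_1$ is precisely what accommodates this parity and completes the argument.
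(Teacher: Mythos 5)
Your proposal is correct and takes essentially the same route as the paper: both perform the Householder QR reduction adapted to the nested maps $\mathcal{H}_k$, handle the degenerate already-reduced column with a reflection vector whose action fixes $e_1$, and use the sign freedom $u_1\in\{1,-1\}$ in $\mathcal{H}_1$ to absorb the determinant. The only cosmetic difference is that you invoke orthogonality at every inductive step to keep the residual block in $\mathbf{O}(k-1)$, whereas the paper runs a generic QR algorithm and only at the end observes that $R=Q'\tilde{Q}$ is orthogonal and upper triangular with positive diagonal, hence $R=I$.
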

	
		Note that Theorem \ref{t2} would not be valid if $\mathcal{H}_1(\cdot)$ was a standard Householder reflection. In fact, in the two-dimensional case, for instance, the matrix $\left(\begin{smallmatrix} 1 & 0 \\ 0 & -1 \end{smallmatrix}\right)$ cannot be expressed as the product of exactly two standard Householder matrices.
	
	The parametrisation in \eqref{eq12} has the following advantages:
			\begin{enumerate}
		\item The parametrisation is smooth\footnote{except on a subset of zero Lebesgue measure.}, which is convenient for training with gradient descent. It is also flexible - a good trade-off between expressiveness and speed can be found by tuning the number of reflection vectors.  
		\item The time and space complexities involved in one gradient calculation are, in the worst case, the same as that of the sRNN with the same number of hidden units. This is discussed in the following subsections.
		\item When $m < n$, the matrix $W$ is always orthogonal, as long as the reflection vectors are nonzero. For $m=n$, the only additional requirement for $W$ to be orthogonal is that $u_1 \in \{-1, 1\}$. 
		\item When $m=n$, the transition matrix can span the whole set of $n\times n$ orthogonal matrices. In this case, the total number of parameters needed for $W$ is $n(n+1)/2$. This results in only $n$ redundant parameters since the orthogonal set $\mathbf{O}(n)$ is a $n(n-1)/2$ manifold. 
	\end{enumerate}

	\subsection{\textbf{Back-propagation algorithm}}
	Let $\mathbf{u}_i \in \mathbb{R}^{i}$. Let $U\coloneqq(\mathbf{u}_n| \dots| \mathbf{u}_{n-m+1}) \in \mathbb{R}^{n \times m}$ be the parameter matrix constructed from the reflection vectors $\{\mathbf{u}_i\}$. In particular, the $j$-th column of $U$ can be expressed using the zero vector $\mathbf{0}_{j-1}\in \mathbb{R}^{j-1}$ as 
	\begin{align}
	U_{*,j} = \left[ \begin{array}{c} \mathbf{0}_{j-1} \\ \mathbf{u}_{n-j+1} \end{array} \right] \in \mathbb{R}^n, \quad 1 \leq j \leq m.   \label{eq101}
	\end{align} 
	
	Let $\mathcal{L}$ be a scalar loss function and $C^{(t)} \coloneqq W h^{(t-1)}$, where $W$ is constructed using the $\{\mathbf{u}_i\}$ vectors following Equation \eqref{eq12}. In order to back-propagate the gradients through time, we need to compute the following partial derivatives
	\begin{align}
	\frac{\partial \mathcal{L}}{\partial U^{(t)}} &\coloneqq \left[\frac{\partial C^{(t)}}{\partial U}\right]' \frac{\partial \mathcal{L}}{\partial C^{(t)}},\label{e103}\\
\frac{\partial \mathcal{L}}{\partial h^{(t-1)}}   &= \left[\frac{\partial C^{(t)}}{\partial h^{(t-1)}}\right]' \frac{\partial \mathcal{L}}{\partial C^{(t)}}\label{e6},
	\end{align} 
	at each time step $t$. Note that in Equation \eqref{e103}  $h^{(t-1)}$ is taken as a constant with respect to $U$. Furthermore, we have $\frac{\partial \mathcal{L}}{\partial U} = \sum_{t=1}^{T} \frac{\partial \mathcal{L}}{\partial U^{(t)}}$, where $T$ is the length of the input sequence. The gradient flow through the RNN at time step $t$ is shown in Figure \ref{f6}.
	\begin{figure}[h!]
		\centering
		\includegraphics[trim=1cm 1.3cm 1cm 1.3cm, clip,  width=1\linewidth]{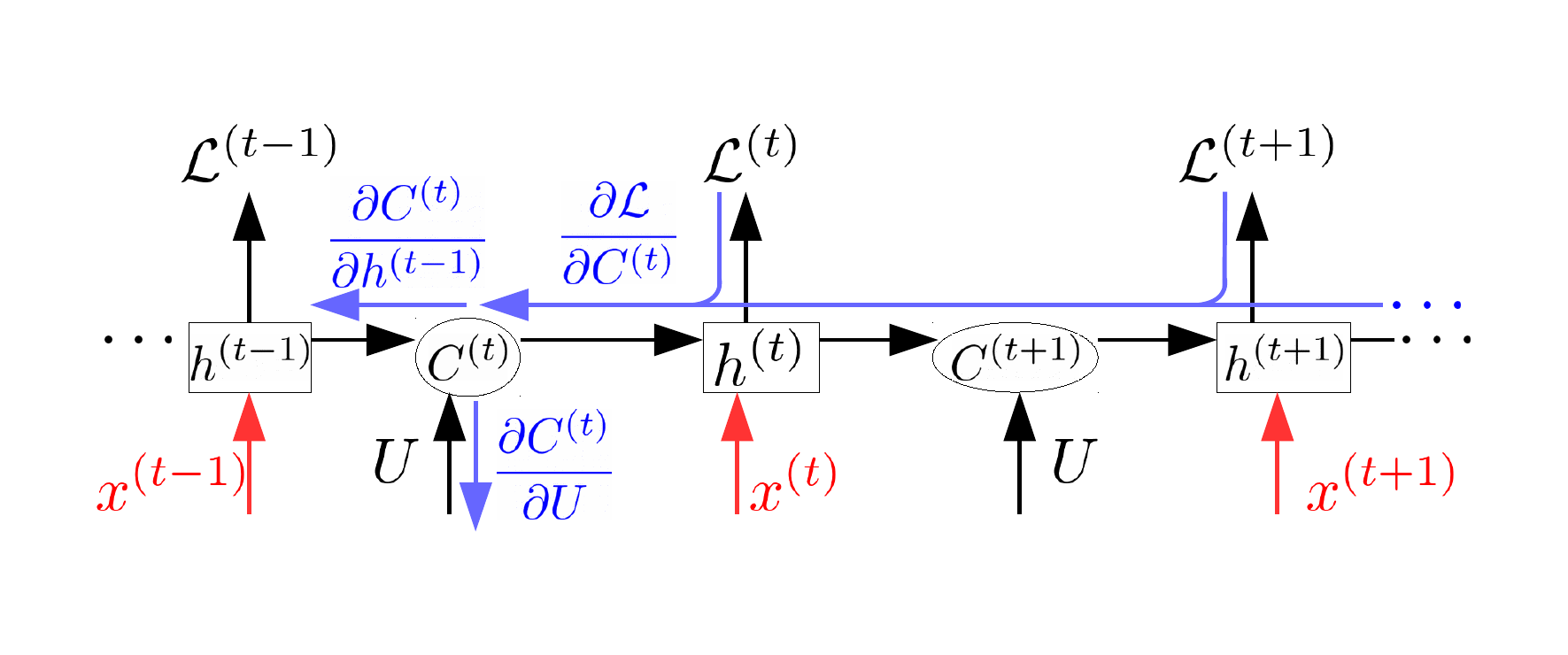}
		\caption{Gradient flow through the RNN at time step $t$. Note that we have $\mathcal{L} = \sum_{t=1}^{T} \mathcal{L}_t(o_t, y_t)$, where $\{y_t\}$ are target outputs.}
		\label{f6}
	\end{figure}
	
	Before describing the algorithm to compute the back-propagated gradients $\frac{\partial \mathcal{L}}{\partial U^{(t)}}$ and $\frac{\partial \mathcal{L}}{\partial h^{(t-1)}}$, we first derive their expressions as a function of $U$, $h^{(t-1)}$ and $ \frac{\partial \mathcal{L}}{\partial C^{(t)}}$ using the \textit{compact WY representation} \cite{joffrain2006accumulating} of the product of Householder reflections. 
	\begin{proposition}
		\label{p1}
		Let $n, m \in \mathbb{N}$ s.t. $m \leq n - 1$. Let $\mathbf{u}_i \in  \mathbb{R}^{i}$ and $U= (\mathbf{u}_n| \dots| \mathbf{u}_{n - m +1})$ be the matrix defined in Equation \eqref{eq101}. We have 
		\begin{align}
		& T \coloneqq \mbox{striu}(U'U) + \frac{1}{2} \mbox{diag}(U'U), \label{eq8} \\
		& \mathcal{H}_n(\mathbf{u}_n) \dots \mathcal{H}_{n-m+1}(\mathbf{u}_{n-m+1})=  I  - UT^{-1} U', \label{eq100}
		\end{align}
		where $\mbox{striu}(U'U)$, and $\mbox{diag}(U'U)$ represent the strictly upper part and the diagonal of the matrix $U'U$, respectively. 
	\end{proposition}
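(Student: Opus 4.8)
The plan is to prove \eqref{eq100} by induction on the number $m$ of reflections, after first rewriting each factor as a rank-one update whose vector is a column of $U$. Since $m \leq n-1$, every factor $\mathcal{H}_k(\mathbf{u}_k)$ with $n-m+1 \leq k \leq n$ satisfies $n-m+1 \geq 2$ and is therefore a genuine Householder reflection, so $\mathcal{H}_k(\mathbf{u}_k) = I - \frac{2}{\norm{\mathbf{u}_k}^2} v v'$ where $v = (\mathbf{0}_{n-k}', \mathbf{u}_k')' = U_{*,n-k+1}$ is exactly the corresponding column of $U$ defined in \eqref{eq101}. Writing $v_j := U_{*,j}$ and $P_j := I - \frac{2}{\norm{v_j}^2} v_j v_j'$, the claim becomes $P_1 \cdots P_m = I - U T^{-1}U'$ with $U = (v_1|\cdots|v_m)$, and \eqref{eq8} is merely the definition of $T$.

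First I would dispatch the base case $m=1$: here $T = \tfrac12\norm{v_1}^2$ is a positive scalar, so $T^{-1} = 2/\norm{v_1}^2$ and $I - U T^{-1}U' = I - \frac{2}{\norm{v_1}^2}v_1 v_1' = P_1$, as required. More generally, since $T$ is upper triangular with strictly positive diagonal entries $\tfrac12\norm{v_j}^2$ (the reflection vectors being nonzero), it is invertible, so $T^{-1}$ is well defined throughout.

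For the inductive step I would let $U_k = (v_1|\cdots|v_k)$, $Q_k = P_1\cdots P_k$, and let $T_k$ denote the matrix \eqref{eq8} built from $U_k$. The strictly-upper part of the $k$-th column of $U_k'U_k$ is exactly $s_k := U_{k-1}'v_k$, so $T_k$ has the block form $\left(\begin{smallmatrix} T_{k-1} & s_k \\ 0 & \frac12\norm{v_k}^2\end{smallmatrix}\right)$. Setting $\tau := 2/\norm{v_k}^2$, the formula for the inverse of a block upper-triangular matrix gives $T_k^{-1} = \left(\begin{smallmatrix} T_{k-1}^{-1} & -\tau T_{k-1}^{-1}s_k \\ 0 & \tau \end{smallmatrix}\right)$. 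Expanding $U_k T_k^{-1}U_k'$ in these blocks and using $s_k = U_{k-1}'v_k$ to replace $U_{k-1}T_{k-1}^{-1}s_k$ by $U_{k-1}T_{k-1}^{-1}U_{k-1}'v_k$, one obtains
\[
I - U_k T_k^{-1}U_k' = I - U_{k-1}T_{k-1}^{-1}U_{k-1}' - \tau\, v_k v_k' + \tau\, U_{k-1}T_{k-1}^{-1}U_{k-1}' v_k v_k'.
\]
On the other hand, by the induction hypothesis $Q_{k-1} = I - U_{k-1}T_{k-1}^{-1}U_{k-1}'$, so expanding $Q_k = Q_{k-1}P_k = Q_{k-1}(I - \tau v_k v_k')$ produces exactly these same four terms. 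This closes the induction and proves \eqref{eq100}.

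The crux of the argument is the algebra of this inductive step. The factor $\tfrac12$ on the diagonal of $T$ is precisely what makes the block inverse generate the scalar $\tau = 2/\norm{v_k}^2$ matching the rank-one coefficient in $P_k$, while the identity $s_k = U_{k-1}'v_k$ is what forces the cross term $\tau\, U_{k-1}T_{k-1}^{-1}U_{k-1}'v_k v_k'$ in the expansion of $I - U_k T_k^{-1}U_k'$ to coincide with the cross term generated by multiplying out $Q_{k-1}(I-\tau v_k v_k')$. The only bookkeeping subtlety is keeping the two indexings straight: the reflections run $k=n,\dots,n-m+1$, whereas the columns of $U$ run $j = 1,\dots,m$ through $j = n-k+1$. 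Everything else is routine block matrix multiplication.
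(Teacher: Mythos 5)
Your proof is correct. Note that the paper itself does not prove Proposition \ref{p1}: it states the compact WY representation as a known result, citing \cite{joffrain2006accumulating}, so your induction is a self-contained derivation of something the paper takes off the shelf. The argument holds up at every step: the identification $\mathcal{H}_k(\mathbf{u}_k)=I-\frac{2}{\norm{v}^2}vv'$ with $v=U_{*,n-k+1}$ is exactly right (and your observation that $m\leq n-1$ keeps every factor a genuine reflection, avoiding the special map $\mathcal{H}_1$, is the correct reading of the hypothesis); the block form $T_k=\left(\begin{smallmatrix} T_{k-1} & s_k \\ 0 & \frac12\norm{v_k}^2\end{smallmatrix}\right)$ with $s_k=U_{k-1}'v_k$ follows from $T_k$ being upper triangular so that its leading principal block is $T_{k-1}$; the block inverse is computed correctly; and expanding $U_kT_k^{-1}U_k'$ column-blockwise does yield the same four terms as $Q_{k-1}(I-\tau v_kv_k')$. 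You also correctly flag the one implicit hypothesis, namely that the reflection vectors are nonzero so that the diagonal of $T$ is positive and $T^{-1}$ exists (this is already needed for the $\mathcal{H}_k(\mathbf{u}_k)$ to be defined). The only cosmetic remark is that your induction multiplies new reflections on the right ($Q_k=Q_{k-1}P_k$), i.e.\ you are appending the reflection with the most leading zeros last; this matches the ordering $\mathcal{H}_n(\mathbf{u}_n)\dots\mathcal{H}_{n-m+1}(\mathbf{u}_{n-m+1})$ and the column convention of Equation \eqref{eq101}, so the bookkeeping you worried about is in fact consistent.
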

	Equation \eqref{eq100} is the compact WY representation of the product of Householder reflections. For the particular case where $m=n$, the RHS of Equation \eqref{eq100} should be replaced by $\left(I  - U T^{-1} U'\right) \mathcal{H}_1(u_1)$, where $\mathcal{H}_1$ is defined in \eqref{eq:18} and $U = (\mathbf{u}_n| \dots| \mathbf{u}_2)$. 
	
	The following theorem gives the expressions of the gradients $\frac{\partial \mathcal{L}}{\partial U^{(t)}}$ and $\frac{\partial \mathcal{L}}{\partial h^{(t-1)}}$ when $m \leq n-1$ and $h=h^{(t-1)}$.
	\begin{theorem}
		\label{th2}
		Let $n,m \in \mathbb{N}$ s.t. $m \leq n - 1$. Let $U\in \mathbb{R}^{n \times m}$, $h\in \mathbb{R}^n$, and $C=(I - U T^{-1} U')h$, where $T$ is defined in Equation \eqref{eq8}. If $\mathcal{L}$ is a scalar loss function which depends on $C$, then we have
		\begin{align}
		\label{eq5}
	\frac{\partial \mathcal{L}}{\partial U}	 =& U [ (\tilde{h} \tilde{C}') \circ  B' + (\tilde{C} \tilde{h}') \circ  B ]  - \dfrac{\partial \mathcal{L}}{\partial C} \tilde{h}'   - h \tilde{C}',   \\
		\dfrac{\partial \mathcal{L}}{\partial h} = &\frac{\partial \mathcal{L}}{\partial C} - U \tilde{C}, \label{eq6}
		\end{align}
		where $\tilde{h}  = T^{-1}U'h$, $\tilde{C} = (T')^{-1} U'\frac{\partial \mathcal{L}}{\partial C}$, and  $B = \mbox{striu}(J) + \frac{1}{2} I$, with $J$ being the $m\times m$ matrix of all ones and $\circ$ the Hadamard product. 
	\end{theorem}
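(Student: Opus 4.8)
The plan is to start from the compact WY representation established in Proposition \ref{p1}. Writing $W = I - U T^{-1} U'$ and setting $\tilde h = T^{-1}U'h$, we have $C = Wh = h - U\tilde h$. The gradient with respect to $h$ is then immediate: since $T$ does not depend on $h$, we have $\partial C/\partial h = W$, so $\partial\mathcal{L}/\partial h = W'\,\partial\mathcal{L}/\partial C = (I - U(T')^{-1}U')\,\partial\mathcal{L}/\partial C = \partial\mathcal{L}/\partial C - U\tilde C$, which is \eqref{eq6}. The real work is the gradient with respect to $U$, and the crucial point is that $T$ is itself a function of $U$, so differentiating $C$ must account for this implicit dependence.

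First I would record the key algebraic fact that $T = (U'U)\circ B$, where $B = \mathrm{striu}(J) + \tfrac12 I$; this is just a restatement of the definition \eqref{eq8}, since multiplying entrywise by $B$ keeps the strictly upper part and halves the diagonal. Consequently $dT = \bigl((dU)'U + U'(dU)\bigr)\circ B$. I would then compute the differential of $C$ via the product and inverse rules, using $dT^{-1} = -T^{-1}(dT)T^{-1}$:
\begin{align*}
dC &= -(dU)\,\tilde h - U\,d\tilde h,\\
d\tilde h &= -T^{-1}(dT)\,\tilde h + T^{-1}(dU)'h.
\end{align*}

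To extract $\partial\mathcal{L}/\partial U$ I would pair $dC$ against $G := \partial\mathcal{L}/\partial C$ through the Frobenius inner product $\langle A,B\rangle = \mathrm{tr}(A'B)$, so that $d\mathcal{L} = \langle G, dC\rangle$ and the gradient is read off as the coefficient of $dU$ (this total derivative is exactly $\partial \mathcal{L}/\partial U$, implicit dependence included). Substituting the expressions above produces three groups of terms. The term $-\langle G,(dU)\tilde h\rangle$ gives $-G\tilde h'$; the term coming from $T^{-1}(dU)'h$ gives $-h\tilde C'$ after recognising that $(U'G)'T^{-1} = \tilde C'$; and the term involving $dT$ gives the factor $U[(\tilde h\tilde C')\circ B' + (\tilde C\tilde h')\circ B]$. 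The main manipulation here is the identity $\langle M, N\circ B\rangle = \langle M\circ B, N\rangle$, which lets me move the Hadamard mask off $dT$ and onto $\tilde C\tilde h'$, after which the two pieces $(dU)'U$ and $U'(dU)$ inside $dT$ produce the two Hadamard terms carrying $B'$ and $B$ respectively. Collecting the three groups yields \eqref{eq5}.

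I expect the main obstacle to be bookkeeping rather than any deep idea: correctly propagating the implicit dependence of $T^{-1}$ on $U$, and carefully tracking transposes through the Hadamard-masked trace identities so that the $B$ and $B'$ factors land on the correct terms. A secondary point worth verifying is that all rearrangements under the trace are legitimate (cyclicity and $\mathrm{tr}(X(dU)') = \langle X, dU\rangle$), which is routine but easy to get wrong by a stray transpose.
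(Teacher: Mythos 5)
Your proposal is correct and follows essentially the same route as the paper's proof: both compute the differential $dC$ using $dT^{-1}=-T^{-1}(dT)T^{-1}$ with $dT=\bigl((dU)'U+U'(dU)\bigr)\circ B$, pair against $\partial\mathcal{L}/\partial C$ under the trace, and move the Hadamard mask via the identity $\mathrm{Tr}(A(B\circ C))=\mathrm{Tr}((A\circ B')C)$ to read off the three groups of terms. The only (harmless) difference is that you obtain \eqref{eq6} directly from $\partial C/\partial h = I-UT^{-1}U'$ rather than carrying the $dh$ term through the trace computation as the paper does.
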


	The proof of Equations \eqref{eq5} and \eqref{eq6} is provided in Appendix \ref{A}. Based on Theorem \ref{th2}, Algorithm \ref{alg} performs the one-step forward-propagation (FP) and back-propagation (BP) required to compute $C^{(t)}$, $	\frac{\partial \mathcal{L}}{\partial U^{(t)}}$, and $	\frac{\partial \mathcal{L}}{\partial h^{(t-1)}}$. See Appendix \ref{B} for more detail about how this algorithm is derived using Theorem \ref{th2}. 
	
	\begin{algorithm}[tb]
		\caption{Local forward and backward propagations at time step $t$. For a matrix $A$, $A_{*,k}$ denotes the $k$-th column.}
		\label{alg}
		\begin{algorithmic}[1]
			\STATE {\bfseries Inputs:} $h^{(t-1)}$, $\frac{\partial \mathcal{L}}{\partial C}$, $U=(\mathbf{u}_n|\dots| \mathbf{u}_{n-m+1})$.
			\STATE {\bfseries Outputs:}  $\frac{\partial \mathcal{L}}{\partial U^{(t)}}$, $\frac{\partial \mathcal{L}}{\partial h^{(t-1)}}$, $C^{(t)}=Wh^{(t-1)}$\\
			\STATE {\bfseries Require:} $G \in \mathbb{R}^{n\times m}$, $g \in \mathbb{R}^{n}$, $H \in \mathbb{R}^{n \times (m+1)}$ 
			\STATE $N \leftarrow (\norm{\mathbf{u}_n}^2, \dots, \norm{\mathbf{u}_{n-m+1}}^2)$
			\STATE $H_{*,m+1}  \leftarrow h^{(t-1)}$
			\STATE $g \leftarrow \frac{\partial \mathcal{L}}{\partial C}$ 
			\FOR[Local Forward Propagation]{$k=m$ {\bfseries to} $1$} 
			\STATE $\tilde{h}_{k} \leftarrow \frac{2}{N_{k} } U_{*, k}' H_{*,k+1}$
			\STATE $H_{*,k} \leftarrow H_{*,k+1} - \tilde{h}_{k}  U_{*,k}$
			\ENDFOR
				\FOR[Local Backward Propagation]{$k=1$ {\bfseries to} $m$}
			\STATE \label{aa} $\tilde{C}_{k} \leftarrow \frac{2}{N_{k} } U_{*, k}' g$
			\STATE $g\leftarrow g- \tilde{C}_{k}  U_{*,k}$
			\STATE $G_{*, k} \leftarrow - \tilde{h}_k g
		 - \tilde{C}_k H_{*, k+1}$
			\ENDFOR
			\STATE $C^{(t)} \leftarrow H_{*, 1}$
			\STATE $\frac{\partial\mathcal{L}}{\partial h^{(t-1)}}\leftarrow  g$
			\STATE $\frac{\partial \mathcal{L}}{\partial U^{(t)}} \leftarrow G$
		\end{algorithmic}
	\end{algorithm}

 In the next section we analyse the time and space complexities of this algorithm.
	\subsection{Time and Space complexity}
		At each time step $t$, the flop count required by Algorithm \ref{alg} is $(13n + 2)m$;
		 $6nm$ for the one-step FP and $(7n+2)m$ for the one-step BP. Note that the vector $N$ only needs to be calculated at one time step. This reduces the flop count at the remaining time steps to $(11n+3)m$. The fact that the matrix $U$ has all zeros in its upper triangular part can be used to further reduce the total flop count to $(11n-3m+5)m$; $(4n-m+2)m$ for the one-step FP and $(7n-2m+3)m$ for the one-step BP. See Appendix \ref{C} for more details.
		
		Note that if the values of the matrices $H$, defined in Algorithm \ref{alg}, are first stored during a ``global'' FP (i.e. through all time steps), then used in the BP steps, the time complexity\footnote{We considered only the time complexity due to computations through the hidden-to-hidden connections of the network.} for a global FP and BP using one input sequence of length $T$ are, respectively, $\approx 3n^2T$ and $\approx 5n^2T$, when $m \approx n$ and $n \gg 1$. In contrast with the sRNN case with $n$ hidden units, the global FP and BP have time complexities $\approx 2n^2T$ and $\approx 3n^2T$. Hence, when $m \approx n$, the FP and BP steps using our parametrisation require only about twice more flops than the sRNN case with the same number of hidden units. 
		
		Note, however, that storing the values of the matrices $H$ at all time steps requires the storage of $mnT$ values for one sequence of length $T$, compared with $nT$ when only the hidden states $\{h^{(t)}\}_{t=1}$ are stored. When $m \gg 1$ this may not be practical. One solution to this problem is to generate the matrices $H$ locally at each BP step using $U$ and $h^{(t-1)}$. This results in a global BP complexity of $(11n-3m+5)mT$. Table \ref{t3}  summarises the flop counts for the FP and BP steps. Note that these flop counts are for the case when $m \leq n-1$. When $m=n$, the complexity added due to the multiplication by $\mathcal{H}_1(u_1)$ is negligible. 

		\begin{table}[]
		\centering
		\begin{tabular}{c|cc}
			\hline
			 &     \multicolumn{2}{c}{Flop counts}  \\
			Model   & FP & BP  \\
				\hline
			oRNN $(n, m)$    & $(4n-m+2)m$  &   $(7n-2m+3)m$ \\
			sRNN  $(n)$  & $2n^2 - n$  & $3n^2 - n$  \\
			\hline 
		\end{tabular}
		\caption{Summary of the flop counts due to the computations of one-step FP and BP through the hidden-to-hidden connections. The BP flop counts for the oRNN case assumes that the $H$ matrices (see Algorithm \ref{alg}) are not locally generated during the BP steps. Otherwise, the flop count would be $(11n-3m+5)m$.}
		\label{t3}
	\end{table}
	
	\subsection{Extension to the Unitary case}
		Although we decided to focus on the set of real-valued orthogonal matrices, for the reasons given in Section \ref{se2}, our parametrisation can readily be modified to apply to the general unitary case. 
		
		Let $\hat{\mathcal{H}}_k: \mathbb{C}^{k} \rightarrow \mathbb{C}^{n\times n}$, $2 \leq k\leq n$, be defined by Equation \eqref{eq:19} where the transpose sign $'$  is replaced by the conjugate transpose ${}^*$. Furthermore, let $\hat{\mathcal{H}}_1: \mathbb{R}^n \rightarrow \mathbb{C}^{n \times n}$ be defined as $\hat{\mathcal{H}}_1(\theta) =\mbox{diag}(e^{\mathbf{i} \theta_1}, \dots, e^{\mathbf{i} \theta_n})$. With the new mappings $\{\hat{\mathcal{H}}_k\}_{k=1}^{k=n}$, we have the following corollary.
		\begin{corollary}
			Let $\hat{\mathcal{M}}_1$ be the mapping defined as 
			\begin{align*}
			\hat{\mathcal{M}}_1: \mathbb{R}^n \times \mathbb{C}^2 \times \dots \times \mathbb{C}^n  &\rightarrow \mathbb{C}^{n\times n}\\
			(\theta, \mathbf{u}_2, \dots, \mathbf{u}_n) \mapsto& \hat{\mathcal{H}}_n(\mathbf{u}_n) \dots \hat{\mathcal{H}}_2(\mathbf{u}_2) \hat{\mathcal{H}}_{1}(\theta).
			\end{align*}
			
		The image of $\hat{\mathcal{M}}_1$ spans the full set of unitary matrices $\mathbf{U}(n)$ and any point on its image is a unitary matrix.
		\end{corollary}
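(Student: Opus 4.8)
The plan is to prove the two assertions separately: that every matrix in the image of $\hat{\mathcal{M}}_1$ is unitary, and that conversely every unitary matrix is attained.

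For the first assertion I would observe that for $k \geq 2$ the quantity $P(\mathbf{u}) = \mathbf{u}\mathbf{u}^*/\|\mathbf{u}\|^2$ is a Hermitian rank-one projector, so the bottom-right block satisfies $(I_k - 2P)^*(I_k - 2P) = I_k - 4P + 4P^2 = I_k$; embedding it alongside an identity block preserves this, so each $\hat{\mathcal{H}}_k(\mathbf{u})$ is unitary (in fact a Hermitian involution). The factor $\hat{\mathcal{H}}_1(\theta) = \operatorname{diag}(e^{\mathbf{i}\theta_1},\dots,e^{\mathbf{i}\theta_n})$ is unitary since each diagonal entry has modulus one. A product of unitary matrices is unitary, so every point in the image lies in $\mathbf{U}(n)$.

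For surjectivity I would mirror the Householder-triangularisation argument underlying Theorem \ref{t2}, adapted to the complex field. Given $W \in \mathbf{U}(n)$, I would reduce it to diagonal form one column at a time: left-multiply by $\hat{\mathcal{H}}_n(\mathbf{u}_n)$ to send the first column of $W$ to a multiple $\alpha_1 e_1$ of the first basis vector, then by $\hat{\mathcal{H}}_{n-1}(\mathbf{u}_{n-1})$ (which fixes the first coordinate) to send the second column to $\alpha_2 e_2$, and so on through $\hat{\mathcal{H}}_2(\mathbf{u}_2)$. At each stage, unitarity of the partially reduced matrix forces the already-cleared rows to vanish as well, so after these $n-1$ reflections the product $\hat{\mathcal{H}}_2(\mathbf{u}_2)\cdots\hat{\mathcal{H}}_n(\mathbf{u}_n)\,W$ is diagonal with unit-modulus entries $\alpha_1,\dots,\alpha_n$, i.e. equal to $\hat{\mathcal{H}}_1(\theta)$ with $\theta_j = \arg\alpha_j$. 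Since each $\hat{\mathcal{H}}_k$ for $k \geq 2$ is its own inverse, inverting yields $W = \hat{\mathcal{H}}_n(\mathbf{u}_n)\cdots\hat{\mathcal{H}}_2(\mathbf{u}_2)\,\hat{\mathcal{H}}_1(\theta) = \hat{\mathcal{M}}_1(\theta,\mathbf{u}_2,\dots,\mathbf{u}_n)$, as required.

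The hard part will be the complex reflection step, since over $\mathbb{C}$ a single Householder reflection cannot send an arbitrary unit vector $x$ to a prescribed multiple of a basis vector. A reflection $I - 2\mathbf{u}\mathbf{u}^*/\|\mathbf{u}\|^2$ with $\mathbf{u} = x - y$ maps $x$ to $y$ (with $\|x\| = \|y\|$) only when $x^* y$ is real; hence for $y = \alpha_k e_k$ the phase $\alpha_k$ is not free but must be chosen so that $\bar\alpha_k\, x_k \in \mathbb{R}$ (and so that $\mathbf{u} \neq 0$, which is always arrangeable by flipping the sign of $\alpha_k$ in the degenerate case where $x$ already points along $e_k$). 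This is precisely why the real sign $u_1 \in \{-1,1\}$ of Theorem \ref{t2} must be upgraded to a full diagonal of free phases $\hat{\mathcal{H}}_1(\theta)$: the residual phases $\alpha_j$ left by the reductions cannot be removed by the reflections themselves and are instead collected into $\hat{\mathcal{H}}_1(\theta)$. I would close by checking that the degenerate case causes no obstruction, so the parametrisation covers all of $\mathbf{U}(n)$.
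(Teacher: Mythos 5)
Your proposal is correct and follows essentially the same route as the paper: the paper's own sketch likewise adapts the Householder--QR argument of Theorem 1 to the complex field, citing the fact that for $u = x + e^{\mathbf{i}\theta}\|x\| e_1$ (with $x_1 = e^{\mathbf{i}\theta}|x_1|$) the associated reflection sends $x$ to a unit-modulus multiple of $e_1$, the residual phases being absorbed into the diagonal factor $\hat{\mathcal{H}}_1(\theta)$. Your discussion of the forced phase and of where those phases are collected is in fact more explicit than the paper's two-line sketch.
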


	\section{\textbf{Experiments}}
	All RNN models were implemented using the python library \textit{theano} \cite{2016arXiv160502688short}. For efficiency, we implemented the one-step FP and BP algorithms described in Algorithm \ref{alg} using C code\footnote{Our implementation can be found at \url{https://github.com/zmhammedi/Orthogonal_RNN}.}. We tested the new parametrisation on five different datasets all having long-term dependencies. 
	We call our parametrised network oRNN (for orthogonal RNN). We set its activation function to the \lstinline{leaky_ReLU} defined as $\phi(x) = \max(\frac{x}{10}, x)$. To ensure that the transition matrix of the oRNN is always orthogonal, we set the scalar $u_1$ to -1 if $u_1 \leq 0$ and 1 otherwise after each gradient update. Note that the parameter matrix $U$ in Equation \eqref{eq101} has all zeros in its upper triangular part. Therefore, after calculating the gradient of a loss with respect to $U$ (i.e. $\frac{\partial \mathcal{L}}{\partial U}$), the values in the upper triangular part are set to zero. 
	
	For all experiments, we used the \textit{adam} method for stochastic gradient descent \cite{DBLP:journals/corr/KingmaB14}.
	We initialised all the parameters using uniform distributions similar to \cite{arjovsky2015unitary}.  The biases of all models were set to zero, except for the forget bias of the LSTM, which we set to 5 to facilitate the learning of long-term dependencies \cite{DBLP:journals/corr/KoutnikGGS14}. 	
	
		\subsection{Sequence generation}	
	In this experiment, we followed a similar setting to \cite{DBLP:journals/corr/KoutnikGGS14} where we trained RNNs to encode song excerpts. We used the track \textit{Manyrista} from album \textit{Musica Deposita} by \textit{Cuprum}. We extracted five consecutive excerpts around the beginning of the song, each having $800$ data points and corresponding to 18ms with a 44.1Hz sampling frequency. We trained an sRNN, LSTM, and oRNN for 5000 epochs on each of the pieces with five random seeds. For each run, the lowest Normalised Mean Squared Error (NMSE) during the 5000 epochs was recorded. For each model, we tested three different hidden sizes. The total number of parameters $N_p$ corresponding to these hidden sizes was approximately equal to $250$, $500$, and $1000$. For the oRNN, we set the number of reflection vectors to the hidden size for each case, so that the transition matrix is allowed to span the full set of orthogonal matrices. The results are shown in Figures \ref{f2} and \ref{f1}. All the learning rates were set to $10^{-3}$. The orthogonal parametrisation outperformed the sRNN and performed on average better than the LSTM. 
	
	\begin{figure}[]
		\centering
		\includegraphics[trim=.5cm 0.5cm .5cm 0.8cm, clip,  width=.9\linewidth]{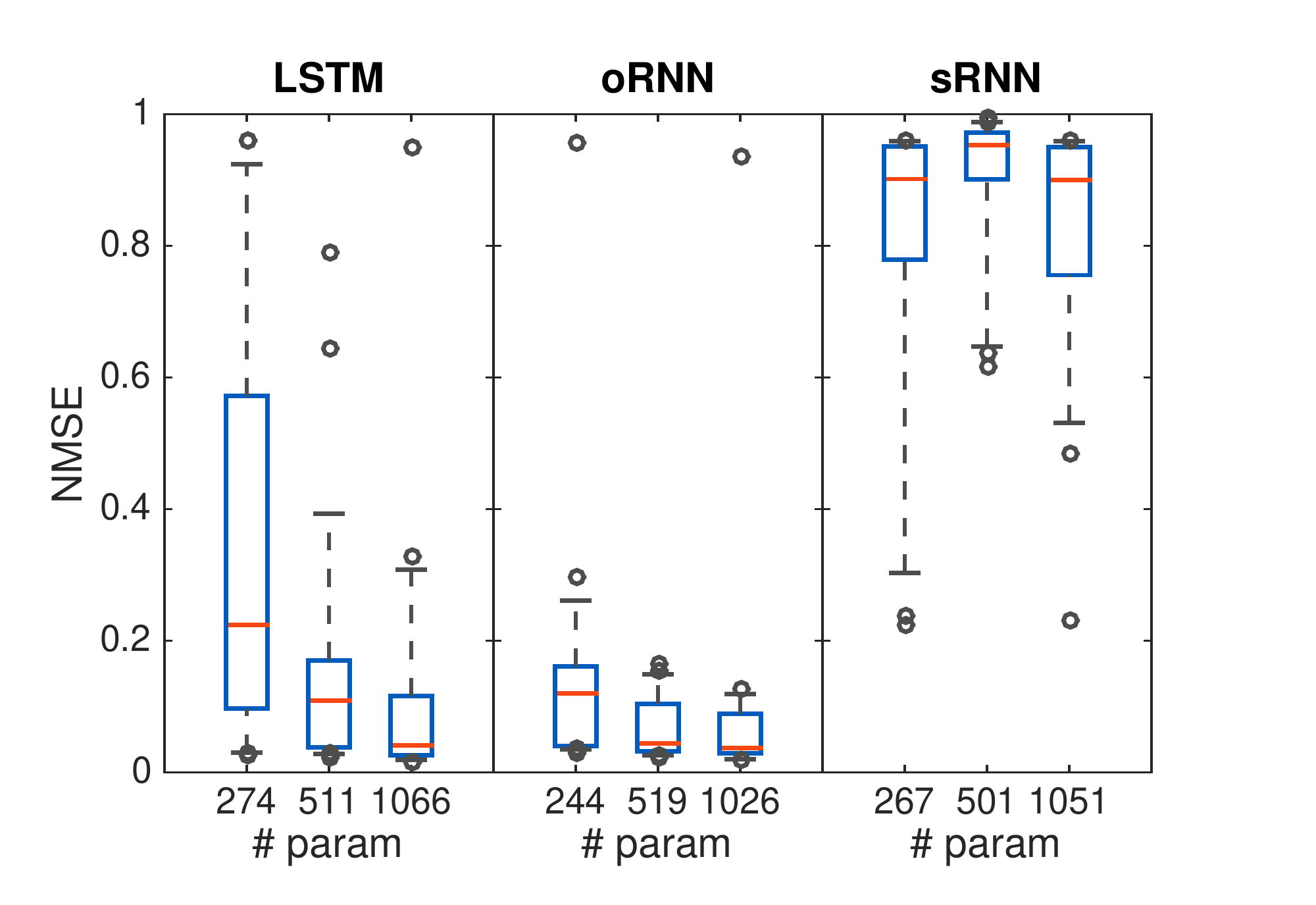}
		\caption{Sequence generation task: The plots show the NMSE distributions for the different models with respect to the total number of parameters for the sequence generation task. The horizontal red lines represent the medians of the NMSE over 25 data points (i.e. five seeds for each of the five song excerpts). The solid rectangles and the dashed bars represent the $[25\%-75\%]$ and  $[9\%-91\%]$ confidence intervals respectively.} 
		\label{f2}
	\end{figure}
	\begin{figure}[]
		\centering
		\includegraphics[trim=0cm 0.5cm 0cm 1cm, clip,  width=0.9\linewidth]{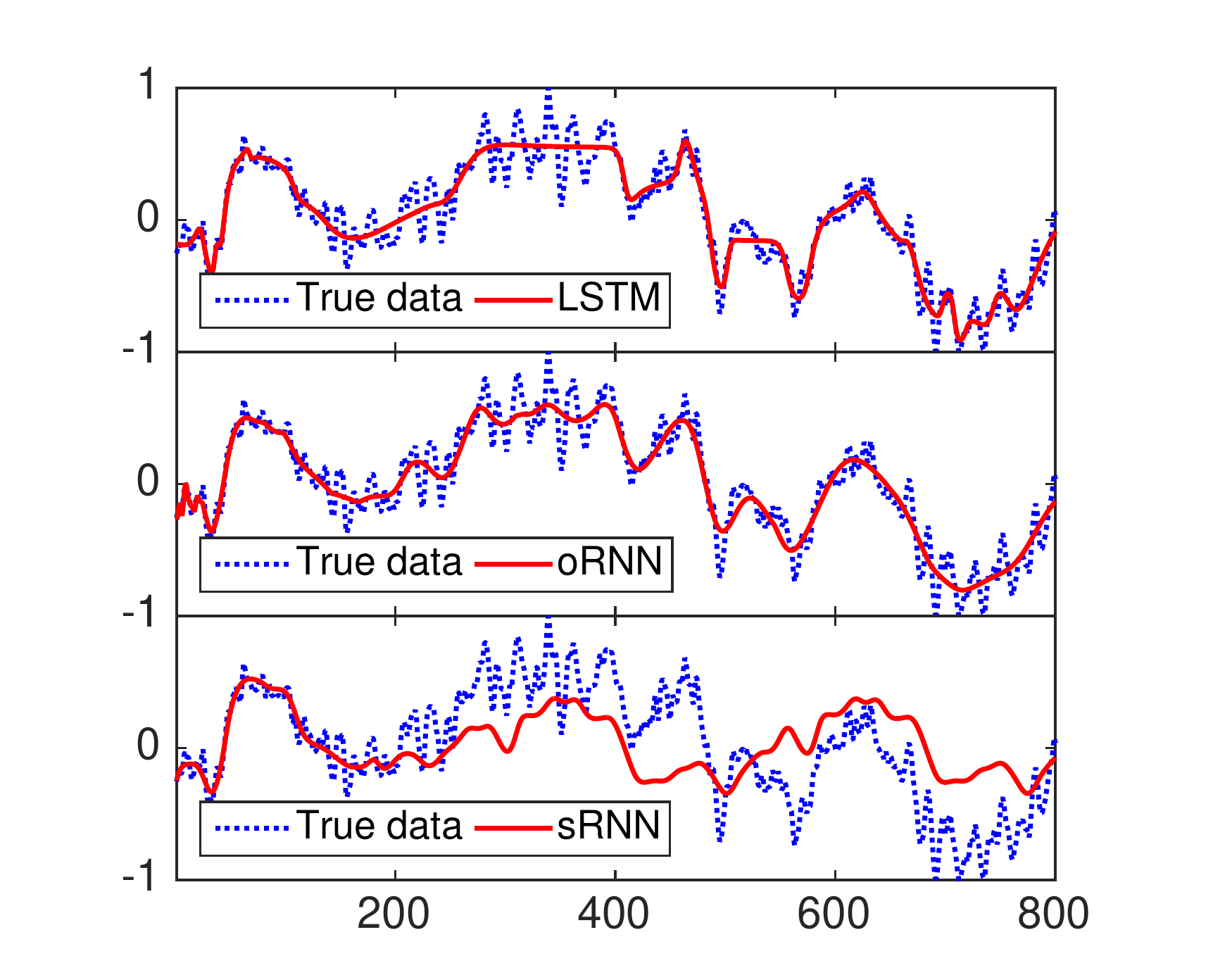}
		\caption{Sequence generation task: The RNN-generated sequences against the true data for one of the five excerpts used. We only displayed the best performing models for $N_{p} \simeq 1000$.}
		\label{f1}
	\end{figure}

		\subsection{Addition Task}
	In this experiment, we followed a similar setting to \cite{arjovsky2015unitary}, where the goal of the RNN is to output the sum of two elements in the first dimension of a two-dimensional sequence. The location of the two elements to be summed are specified by the entries in the second dimension of the input sequence. In particular, the first dimension of every input sequence consists of random numbers between 0 and 1. The second dimension has all zeros except for two elements equal to 1. The first unit entry is located in the first half of the sequence, and the second one in the second half. We tested two different sequence lengths $T=400, 800$. All models were trained to minimise the Mean Squared Error (MSE). The baseline MSE for this task is 0.167; for a model that always outputs one. 
	
	\begin{figure}[h]
		\centering
		\includegraphics[trim=0cm 7cm 0cm 6.8cm, clip,  width=0.9\linewidth]{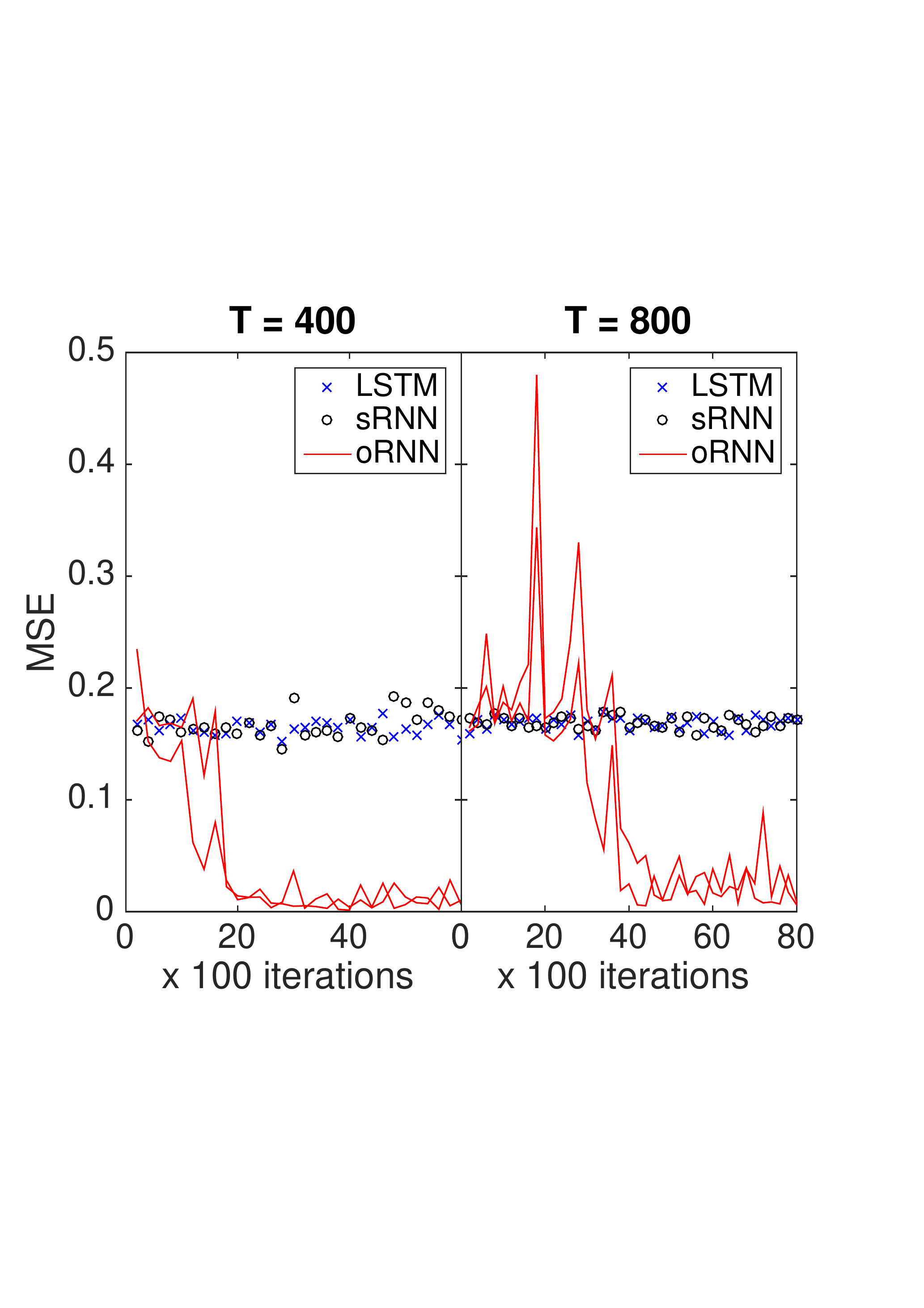}
		\caption{Addition task: For each lag $T$, the red curves represent two runs of the oRNN model with two different random initialisation seeds. LSTM and sRNN did not beat the baseline MSE.}
		\label{fig_sdim}
	\end{figure}
	
	We trained an oRNN with $n=128$ hidden units and $m=16$ reflections. We trained an LSTM and sRNN with hidden sizes 28 and 54, respectively, corresponding to a total number of parameters $\approx 3600$ (i.e. same as the oRNN model). We chose a batch size of 50, and after each iteration, a new set of sequences was generated randomly. The learning rate for the oRNN was set to 0.01. Figure \ref{fig_sdim} displays the results for both lags. 
	
	The oRNN was able to beat the baseline MSE in less than 5000 iterations for both lags and for two different random initialisation seeds. This is in line with the results of the unitary RNN \cite{arjovsky2015unitary}.

    \begin{figure}[]
	\centering
	\includegraphics[trim=.5cm 7.5cm .5cm 7.5cm, clip,  width=0.9\linewidth]{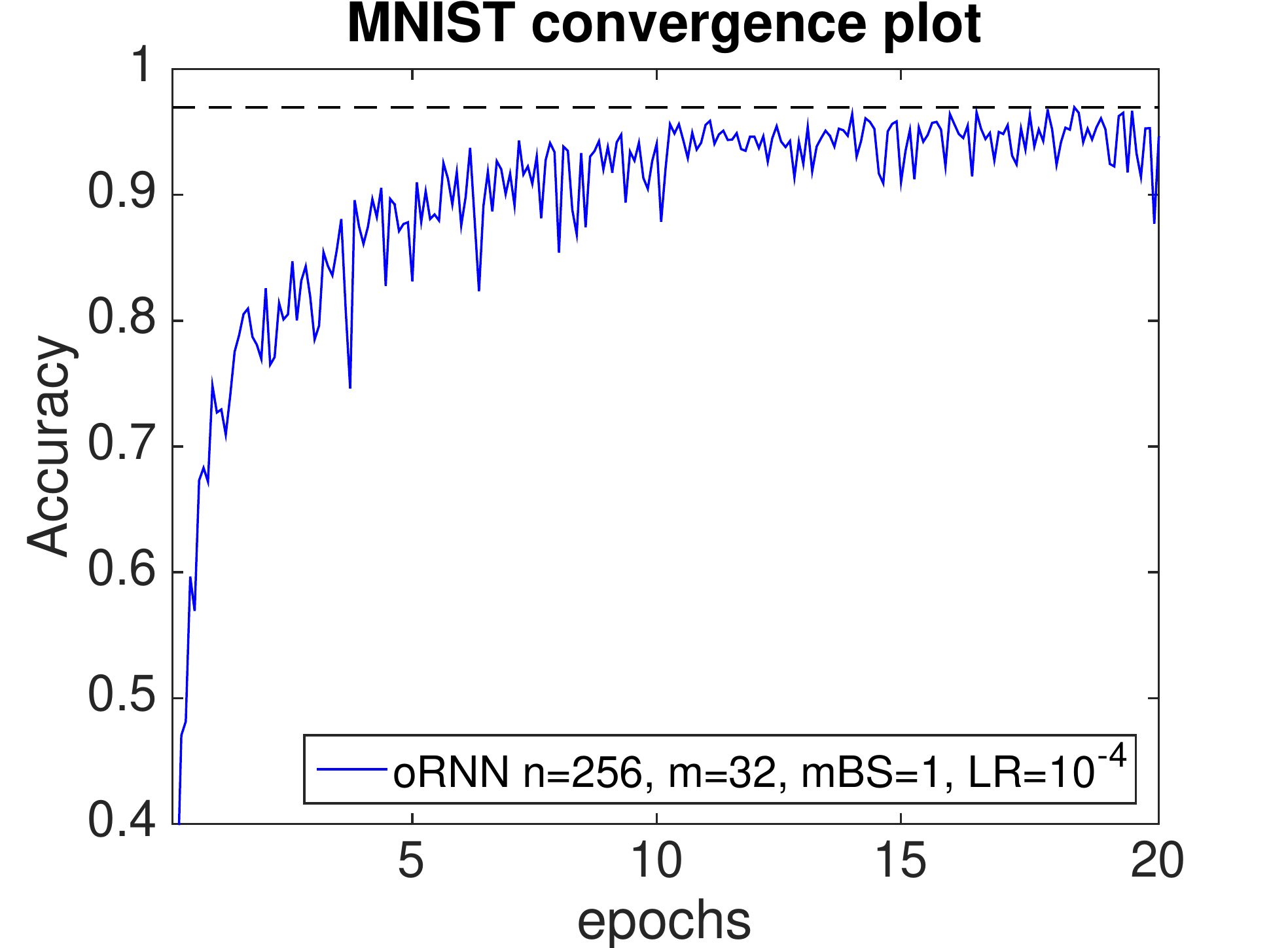}
	\caption{MNIST experiment: Validation accuracy of the oRNN in Table \ref{t1} as a function of the number of epochs. mBS and LR in the legend stand for mini-batch size and learning rate respectively.} 
	\label{f8}
\end{figure}

	\subsection{Pixel MNIST}
    In this experiment, we used the MNIST image dataset. We split the dataset into training (55000 instances), validation (5000 instances), and test sets (10000 instances). We trained oRNNs with $n\in\{128,256\}$ and $m \in \{16, 32,64\}$, where $n$ and $m$ are the number of hidden units and reflections vectors respectively, to minimise the cross-entropy error function. We experimented with (mini-batch size, learning rate) $\in \{(1, 10^{-4}), (50, 10^{-3})\}$.
    
    	\begin{table*}[]
    	\centering
    	\begin{tabular}{ccccc}
    		\hline
    		Model &  hidden size & Number of  & validation  & test \\
    		&(\# reflections) & parameters & accuracy& accuracy  \\
    		\hline
    		oRNN  & 256 (m=32)  & $\approx$11K &  97.0 \%  & 97.2 \% \\
    		RNN \cite{vorontsov2017orthogonality}  &128   & $\approx$35K & -  & 94.1 \% \\
    		uRNN \cite{arjovsky2015unitary} &  512  & $\approx$16K  & - & 95.1 \% \\
    		RC uRNN \cite{wisdom2016full} &  512  & $\approx$16K  & - & 97.5 \% \\
    		FC uRNN \cite{wisdom2016full} &  116  & $\approx$16K  & -  & 92.8 \% \\
    		\hline 
    	\end{tabular}
    	\caption{Results summary for the MNIST digit classification experiment and comparison with the uRNN results available in the literature. 'FC' and 'RC' stand for Full-Capacity and Restricted Capacity respectively. The oRNN with $n=256$ was trained using a mini-batch size of 1 and a learning rate of $10^{-4}$.}
    	\label{t1}
    \end{table*}

Table \ref{t1} compares the test performance of our best model against results available in the literature for unitary/orthogonal RNNs. Despite having fewer total number of parameters, our model performed better than three out the four models selected for comparison (all having $\geq 16K$ parameters). Figure \ref{f8} shows the validation accuracy as a function of the number of epochs of our oRNN model in Table \ref{t1}. Figure \ref{f4} shows the effect of varying the number of reflection vectors $m$ on the performance. 
    \begin{figure}[]
	\centering
	\includegraphics[trim=.5cm 7.5cm .5cm 7.5cm, clip,  width=0.9\linewidth]{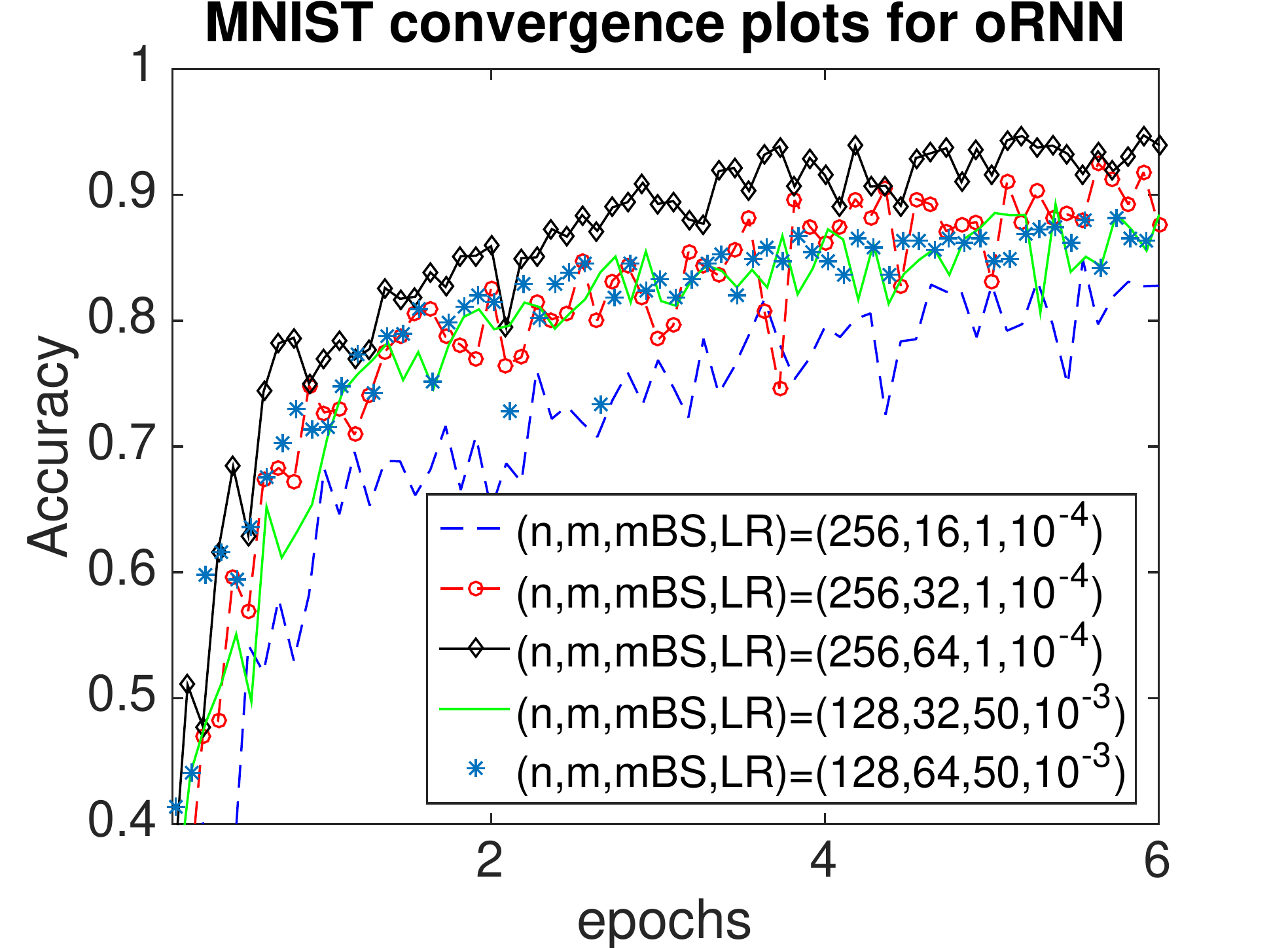}
	\caption{MNIST experiment: Effect of varying the number of reflection vectors $m$ on the validation accuracy and speed of convergence. mBS (resp, LR) stands for mini-batch size (resp, learning rate).} 
	\label{f4}
\end{figure}

	\subsection{Penn Tree Bank}
    In this experiment, we tested the oRNN on the task of character level prediction using the Penn Tree Bank Corpus. The data was split into training (5017K characters), validation (393K characters), and test sets (442K characters). The total number of unique characters in the corpus was 49. The vocabulary size was 10K and any other words were replaced by the special token \lstinline{<unk>}. The number of characters per instance (i.e. char/line) in the training data ranged between 2 and 518 with an average of 118 char/line. We trained an oRNN and LSTM with hidden units 512 and 183 respectively, corresponding to a total of $\approx 180$K parameters, for 20 epochs. We set the number of reflections to 510 for the oRNN. The learning rate was set to 0.0001 for both models with a mini-batch size of 1.

Similarly to \cite{pascanu2013difficulty} we considered two tasks: one where the model predicts one character ahead and the other where it predicts a character five steps ahead. It was suggested that solving the later task would require the learning of longer term correlations in the data rather than the shorter ones. 
Table \ref{t5} summarises the test results. The oRNN and LSTM  performed similarly to each other on the one-step head prediction task. Whereas on the five-step ahead prediction task, the LSTM was better. The performance of both models on this task was close to the state of the art result for RNNs $3.74$ bpc  \cite{pascanu2013difficulty}.

Nevertheless, our oRNN still outperformed the results of \cite{vorontsov2017orthogonality} which used both soft and hard orthogonality constraints on the transition matrix. Their RNN was trained on 99\% of the data (sentences with $\leq$ 300 characters) and had the same number of hidden units as the oRNN in our experiment. The lowest test cost achieved was 2.20(bpc) for the one-step-ahead prediction task.     
	\begin{table}
		\centering
		\begin{tabular}{ccccc}
			\hline
			& Model ($n_h$)	& $\#$ params  & valid cost & test cost \\
			\hline
			\parbox[t]{2mm}{\multirow{2}{*}{\rotatebox[origin=c]{90}{1-step}}} & oRNN (512) & 183K & 1.73 & 1.68    \\ 
			& LSTM (184)  & 181K & 1.69 & 1.68 \\
			\hline
			\parbox[t]{2mm}{\multirow{2}{*}{\rotatebox[origin=c]{90}{5-step}}}&	oRNN (512) & 183K & 3.85 & 3.85 \\ 
			&  LSTM (183) & 180K & 3.81 & 3.8  \\
			\hline
		\end{tabular}
		\caption{Results summary for the Penn Tree bank experiment.}
			\label{t5}
	\end{table}

	\subsection{Copying task}
    We tested our model on the copy task described in details in \cite{Gers2001, arjovsky2015unitary}. Using an oRNN with the \lstinline{leaky_ReLU} we were not able to reproduce the same performance as the uRNN \cite{arjovsky2015unitary, wisdom2016full}. However, we were able to achieve a comparable performance when using the \lstinline{OPLU} activation function \cite{chernodub2016norm}, which is a norm-preserving activation function. In order to explore whether the poor performance of the oRNN was only due to the activation function, we tested the same activation as the uRNN (i.e. the real representation of \lstinline{modReLU} defined in Equation \eqref{eq104}) on the oRNN. This did not improve the performance compared to the \lstinline{leaky_ReLU} case suggesting that the block structure of the uRNN transition matrix, when expressed in the real space (see Section \ref{se2}), may confer special benefits in some cases. 
	
	\section{Discussion}
	\label{se4}
	In this work, we presented a new parametrisation of the transition matrix of a recurrent neural network using Householder reflections. This method allows an easy and computationally efficient way to enforce an orthogonal constraint on the transition matrix which then ensures that exploding gradients do not occur during training. Our method could also be applied to other deep neural architectures to enforce orthogonality between hidden layers. Note that a ``soft'' orthogonal constraint could also be applied using our parametrisation by, for example, allowing $u_1$ to vary continuously between -1 and 1.  
	
	It is important to note that our method is particularly advantageous for stochastic gradient descent when the mini-batch size is close to 1. In fact, if $B$ is the mini-batch size and $T$ is the average length of the input sequences, then a network with $n$ hidden units trained using other methods \cite{vorontsov2017orthogonality,wisdom2016full, hyland2016learning} that enforce orthogonality (see Section \ref{se1}), would have time complexity $\mathcal{O}(BTn^2 + n^3)$. Clearly when $BT \gg n$ this becomes $\mathcal{O}(BTn^2)$, which is the same time complexity as that of the sRNN and oRNN (with $m=n$). In contrast with the case of fully connected deep forward networks with no weight sharing between layers (\# layer $=L$), the time complexity using our method is $\mathcal{O}(BLnm)$ whereas other methods discussed in this work (see Section \ref{se1}) would have time complexity $\mathcal{O}(BLn^2 + L n^3)$. The latter methods are less efficient in this case since $B \gg n$ is less likely to be the case compared with $BT \gg n$ when using SGD.
	
	From a performance point of view, further experiments should be performed to better understand the difference between the unitary versus orthogonal constraint.

	\section*{Acknowledgment}
	The authors would like to acknowledge Department of State Growth Tasmania for partially funding this work through SenseT. 
We would also like to thank Christfried Webers for his valuable feedback.
	\bibliographystyle{icml2017}
	\bibliography{bibliography}

\begin{thebibliography}{17}
\providecommand{\natexlab}[1]{#1}
\providecommand{\url}[1]{\texttt{#1}}
\expandafter\ifx\csname urlstyle\endcsname\relax
  \providecommand{\doi}[1]{doi: #1}\else
  \providecommand{\doi}{doi: \begingroup \urlstyle{rm}\Url}\fi

\bibitem[Arjovsky et~al.(2016)Arjovsky, Shah, and Bengio]{arjovsky2015unitary}
Arjovsky, Martin, Shah, Amar, and Bengio, Yoshua.
\newblock Unitary evolution recurrent neural networks.
\newblock In \emph{International Conference on Machine Learning (ICML)}, 2016.

\bibitem[Chernodub \& Nowicki(2016)Chernodub and Nowicki]{chernodub2016norm}
Chernodub, Artem and Nowicki, Dimitri.
\newblock Norm-preserving orthogonal permutation linear unit activation
  functions (oplu).
\newblock \emph{arXiv preprint arXiv:1604.02313}, 2016.

\bibitem[Gers et~al.(2001)Gers, Eck, and Schmidhuber]{Gers2001}
Gers, F.~A., Eck, D., and Schmidhuber, J.
\newblock Applying {LSTM} to time series predictable through time-window
  approaches.
\newblock In Dorffner, Gerg (ed.), \emph{{Artificial Neural Networks -- ICANN
  2001 (Proceedings)}}, pp.\  669--676, Berlin, 2001. Springer.

\bibitem[Glorot \& Bengio(2010)Glorot and Bengio]{glorot2010understanding}
Glorot, Xavier and Bengio, Yoshua.
\newblock Understanding the difficulty of training deep feedforward neural
  networks.
\newblock In \emph{Aistats}, volume~9, pp.\  249--256, 2010.

\bibitem[Henaff et~al.(2016)Henaff, Szlam, and LeCun]{henaff2016orthogonal}
Henaff, Mikael, Szlam, Arthur, and LeCun, Yann.
\newblock Orthogonal rnns and long-memory tasks.
\newblock \emph{arXiv preprint arXiv:1602.06662}, 2016.

\bibitem[Hochreiter \& Schmidhuber(1997)Hochreiter and
  Schmidhuber]{Hochreiter1997}
Hochreiter, Sepp and Schmidhuber, J{\"u}rgen.
\newblock Long short-term memory.
\newblock \emph{Neural computation}, 9\penalty0 (8):\penalty0 1735--1780, 1997.

\bibitem[Hochreiter et~al.(2001)Hochreiter, Bengio, Frasconi, and
  Schmidhuber]{hochreiter2001gradient}
Hochreiter, Sepp, Bengio, Yoshua, Frasconi, Paolo, and Schmidhuber,
  J{\"{u}}rgen.
\newblock Gradient flow in recurrent nets: the difficulty of learning long-term
  dependencies, 2001.

\bibitem[Hyland \& R{\"{a}}tsch(2017)Hyland and
  R{\"{a}}tsch]{hyland2016learning}
Hyland, Stephanie~L. and R{\"{a}}tsch, Gunnar.
\newblock Learning unitary operators with help from u(n).
\newblock In \emph{Proceedings of the Thirty-First {AAAI} Conference on
  Artificial Intelligence, February 4-9, 2017, San Francisco, California,
  {USA.}}, pp.\  2050--2058, 2017.
\newblock URL \url{http://aaai.org/ocs/index.php/AAAI/AAAI17/paper/view/14930}.

\bibitem[Joffrain et~al.(2006)Joffrain, Low, Quintana-Ort{\'\i}, Geijn, and
  Zee]{joffrain2006accumulating}
Joffrain, Thierry, Low, Tze~Meng, Quintana-Ort{\'\i}, Enrique~S, Geijn, Robert
  van~de, and Zee, Field G~Van.
\newblock Accumulating householder transformations, revisited.
\newblock \emph{ACM Transactions on Mathematical Software (TOMS)}, 32\penalty0
  (2):\penalty0 169--179, 2006.

\bibitem[Kingma \& Ba(2014)Kingma and Ba]{DBLP:journals/corr/KingmaB14}
Kingma, Diederik~P. and Ba, Jimmy.
\newblock Adam: {A} method for stochastic optimization.
\newblock \emph{CoRR}, abs/1412.6980, 2014.
\newblock URL \url{http://arxiv.org/abs/1412.6980}.

\bibitem[Koutn{\'{\i}}k et~al.(2014)Koutn{\'{\i}}k, Greff, Gomez, and
  Schmidhuber]{DBLP:journals/corr/KoutnikGGS14}
Koutn{\'{\i}}k, Jan, Greff, Klaus, Gomez, Faustino~J., and Schmidhuber,
  J{\"{u}}rgen.
\newblock A clockwork {RNN}.
\newblock In \emph{Proceedings of the 31th International Conference on Machine
  Learning, {ICML} 2014, Beijing, China, 21-26 June 2014}, pp.\  1863--1871,
  2014.
\newblock URL \url{http://jmlr.org/proceedings/papers/v32/koutnik14.html}.

\bibitem[Le et~al.(2015)Le, Jaitly, and Hinton]{le2015simple}
Le, Quoc~V., Jaitly, Navdeep, and Hinton, Geoffrey~E.
\newblock A simple way to initialize recurrent networks of rectified linear
  units.
\newblock \emph{CoRR}, abs/1504.00941, 2015.
\newblock URL \url{http://arxiv.org/abs/1504.00941}.

\bibitem[Mikolov(2012)]{mikolov2012statistical}
Mikolov, Tom{\'a}{\v{s}}.
\newblock \emph{Statistical language models based on neural networks}.
\newblock PhD thesis, Brno University of Technology, 2012.

\bibitem[Pascanu et~al.(2013)Pascanu, Mikolov, and
  Bengio]{pascanu2013difficulty}
Pascanu, Razvan, Mikolov, Tomas, and Bengio, Yoshua.
\newblock On the difficulty of training recurrent neural networks.
\newblock \emph{ICML (3)}, 28:\penalty0 1310--1318, 2013.

\bibitem[{Theano Development Team}(2016)]{2016arXiv160502688short}
{Theano Development Team}.
\newblock {Theano: A {Python} framework for fast computation of mathematical
  expressions}.
\newblock \emph{arXiv e-prints}, abs/1605.02688, May 2016.
\newblock URL \url{http://arxiv.org/abs/1605.02688}.

\bibitem[Vorontsov et~al.(2017)Vorontsov, Trabelsi, Kadoury, and
  Pal]{vorontsov2017orthogonality}
Vorontsov, Eugene, Trabelsi, Chiheb, Kadoury, Samuel, and Pal, Chris.
\newblock On orthogonality and learning recurrent networks with long term
  dependencies.
\newblock \emph{arXiv preprint arXiv:1702.00071}, 2017.

\bibitem[Wisdom et~al.(2016)Wisdom, Powers, Hershey, Roux, and
  Atlas]{wisdom2016full}
Wisdom, Scott, Powers, Thomas, Hershey, John~R., Roux, Jonathan~Le, and Atlas,
  Les~E.
\newblock Full-capacity unitary recurrent neural networks.
\newblock In \emph{Advances in Neural Information Processing Systems 29: Annual
  Conference on Neural Information Processing Systems 2016, December 5-10,
  2016, Barcelona, Spain}, pp.\  4880--4888, 2016.

\end{thebibliography}
	
	\clearpage
	
\begin{appendix}
	\section{Proofs}
	\label{A}
	\begin{proof}[Sketch of the proof for Theorem 1]			
		We need to the show that for every $\tilde{Q} \in \mathbf{O}(n)$, there exits a tuple of vectors $(\mathbf{u}_1, \dots, \mathbf{u}_n) \in \mathbb{R}\times \dots \times \mathbb{R}^n$ such that $\tilde{Q} = \mathcal{M}_1(\mathbf{u}_1, \dots, \mathbf{u}_n)$.
		Algorithm \ref{alg1} shows how a QR decomposition can be performed using the matrices $\{\mathcal{H}_k(\mathbf{u}_k)\}_{k=1}^{n}$ while ensuring that the upper triangular matrix $R$ has positive diagonal elements. If we apply this algorithm to an orthogonal matrix $\tilde{Q}$, we get a tuple $(\mathbf{u}_1, \dots, \mathbf{u}_n)$ which satisfies
		\begin{align*}
		Q R  = \mathcal{H}_n(\mathbf{u}_n) \dots \mathcal{H}_1(\mathbf{u}_1) R = \tilde{Q} .
		\end{align*}
		
		Note that the matrix $R$ must be orthogonal since $R = Q' \tilde{Q}$. Therefore, $R = I$, since the only upper triangular matrix with positive diagonal elements is the identity matrix. Hence, we have
		\begin{align*}
		\mathcal{M}_1(\mathbf{u}_1, \dots, \mathbf{u}_n) = \mathcal{H}_n(\mathbf{u}_n) \dots \mathcal{H}_1(\mathbf{u}_1) = \tilde{Q}.
		\end{align*}
		
	\end{proof}
	
	\begin{algorithm}[h]
		\caption{QR decomposition using the mappings $\{\mathcal{H}_k\}$. For a matrix $B \in \mathbb{R}^{n \times n}$, $\{B_{k,k}\}_{ 1\leq k\leq n}$ denote its diagonal elements, and $B_{k..n,k} = (B_{k,k},\dots,B_{n,k})'\in \mathbb{R}^{n-k+1}$.}
		\label{alg1}
		\begin{algorithmic}
			\REQUIRE $A \in \mathbb{R}^{n \times n} $ is a full-rank matrix.
			\ENSURE $Q$ and $R$ where $Q = \mathcal{H}_n(\mathbf{u}_n)\dots \mathcal{H}_1(\mathbf{u}_1)$ and $R$ is upper triangular with positive diagonal elements such that $A = QR$  
			\STATE $R \leftarrow A$ 
			\STATE $Q \leftarrow I$  \COMMENT{Initialise Q to the identity matrix}
			\FOR{$k=1$ to $n-1$}   
			\IF{$R_{k,k} == \norm{R_{k..n,k}} $}  
			\STATE $\mathbf{u}_{n-k+1} = (0, \dots, 0, 1)' \in \mathbb{R}^{n-k+1}$  
			\ELSE
			\STATE $\mathbf{u}_{n-k+1} \leftarrow R_{k..n,k} - \norm{R_{k..n,k}}(1,0, \dots, 0)'$ 
			\STATE $ \mathbf{u}_{n-k+1} \leftarrow \mathbf{u}_{n-k+1} / \norm{\mathbf{u}_{n-k+1}}$
			\ENDIF
			\STATE $R \leftarrow \mathcal{H}_{n-k+1}(\mathbf{u}_{n-k+1}) R$
			\STATE $Q \leftarrow Q \mathcal{H}_{n-k+1}(\mathbf{u}_{n-k+1}) $ 
			\ENDFOR
			\STATE $\mathbf{u}_1 = sgn(R_{n,n}) \in \mathbb{R}$
			\STATE $R \leftarrow \mathcal{H}_1(\mathbf{u}_1) R$ 
			\STATE $Q \leftarrow Q \mathcal{H}_1(\mathbf{u}_1)$ 
		\end{algorithmic}
	\end{algorithm}

	\begin{lemma}{\cite{giles2008extended}}
		Let $A$, $B$, and $C$ be real or complex matrices, such that $C = f(A,B)$ where $f$ is some differentiable mapping. 
		Let $\mathcal{L}$ be some scalar quantity which depends on $C$. Then we have the following identity
		\begin{align*}
		\mbox{Tr}(\overline{C}' dC) =     \mbox{Tr}(\overline{A}' dA) +     \mbox{Tr}(\overline{B}' dB),
		\end{align*}
		where $dA$, $dB$, and $dC$ represent infinitesimal perturbations and \[
		\overline{C} \coloneqq \frac{\partial \mathcal{L}}{\partial C}, \; \overline{A} \coloneqq \left[\frac{\partial C}{\partial A}\right]' \frac{\partial \mathcal{L}}{\partial C} ,\;\overline{B} \coloneqq \left[\frac{\partial C}{\partial B}\right]' \frac{\partial \mathcal{L}}{\partial C}.\]
	\end{lemma}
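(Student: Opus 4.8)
The plan is to recognise that this lemma is simply the chain rule rewritten in trace notation, so the proof reduces to two elementary ingredients: first, that for a scalar $\mathcal{L}$ depending on a matrix $X$ the trace $\mbox{Tr}(\overline{X}' dX)$ coincides with the first-order change $d\mathcal{L}$; and second, that reverse-mode propagation through $C = f(A,B)$ amounts to applying the transpose (adjoint) of the Jacobian of $f$. I would carry out the argument in the real case first and dispatch the complex case at the end.

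First I would record the basic correspondence between trace and total differential. For any real matrix $X$ and scalar $\mathcal{L}$ with $\overline{X}_{ij} := \partial \mathcal{L}/\partial X_{ij}$, expanding the matrix product and the trace gives
\begin{align*}
\mbox{Tr}(\overline{X}' dX) = \sum_{i,j} \overline{X}_{ij}\, dX_{ij} = \sum_{i,j}\frac{\partial \mathcal{L}}{\partial X_{ij}}\, dX_{ij} = d\mathcal{L},
\end{align*}
which is precisely the first-order change in $\mathcal{L}$ caused by the perturbation $dX$. Applying this with $X = C$ immediately yields $d\mathcal{L} = \mbox{Tr}(\overline{C}' dC)$, the left-hand side of the claimed identity.

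Next I would compute the very same change $d\mathcal{L}$ by perturbing the inputs $A$ and $B$ directly. Since $\mathcal{L}$ depends on $A,B$ only through $C = f(A,B)$, the chain rule gives, entrywise,
\begin{align*}
\frac{\partial \mathcal{L}}{\partial A_{pq}} = \sum_{i,j} \frac{\partial \mathcal{L}}{\partial C_{ij}}\,\frac{\partial C_{ij}}{\partial A_{pq}} = \sum_{i,j} \overline{C}_{ij}\,\frac{\partial C_{ij}}{\partial A_{pq}},
\end{align*}
and analogously for $B$. The right-hand side is exactly the $(p,q)$ entry produced by applying the adjoint of the linear Jacobian map $dA \mapsto dC$ to $\overline{C}$, so it equals $\overline{A}_{pq}$ with $\overline{A} := [\partial C/\partial A]' \overline{C}$ as defined in the statement. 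Feeding these back into the trace correspondence of the previous step gives $d\mathcal{L} = \mbox{Tr}(\overline{A}' dA) + \mbox{Tr}(\overline{B}' dB)$, and equating the two expressions for $d\mathcal{L}$ establishes the identity.

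The one point needing care — and the main obstacle — is pinning down the meaning of $\partial C/\partial A$, since both $C$ and $A$ are matrices: it must be read as the linear operator $dA \mapsto dC$ (with $B$ held fixed), whose adjoint with respect to the trace inner product $\langle X, Y\rangle = \mbox{Tr}(X'Y)$ is exactly what the transpose $[\,\cdot\,]'$ abbreviates. Verifying that the entrywise chain-rule sum above really coincides with this adjoint is the crux of the argument; once the Jacobian-transpose interpretation is fixed, the remainder is bookkeeping. Finally, for complex matrices the identity follows from the real case by splitting each matrix into its real and imaginary parts (equivalently, via the real block representation of Section \ref{se2}) and summing the corresponding real differentials, so no separate complex differential calculus is required.
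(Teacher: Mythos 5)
A point of comparison first: the paper does not actually prove this lemma --- it is imported verbatim from \cite{giles2008extended} and used as a black box in the proof of Theorem 2 --- so there is no in-paper argument to measure your proposal against. Judged on its own terms, your proof of the real case is correct and is the standard derivation underlying reverse-mode (adjoint) calculus: the correspondence $d\mathcal{L} = \mbox{Tr}(\overline{X}'\,dX) = \sum_{i,j} (\partial \mathcal{L}/\partial X_{ij})\, dX_{ij}$, the entrywise chain rule through $C = f(A,B)$, and the identification of $[\partial C/\partial A]'$ with the adjoint of the Jacobian map $dA \mapsto dC$ with respect to the trace inner product $\langle X, Y \rangle = \mbox{Tr}(X'Y)$. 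Your emphasis on fixing that adjoint interpretation is exactly the right crux: without it, the notation $\left[\partial C/\partial A\right]' \frac{\partial \mathcal{L}}{\partial C}$ in the statement is not even well defined when both $C$ and $A$ are matrices.

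One caveat concerns your final paragraph. The complex case does not follow from the real case by ``splitting into real and imaginary parts and summing the real differentials'' quite as stated, because the pairing $\mbox{Tr}(\overline{C}'\,dC)$ in the lemma uses the plain transpose and is therefore bilinear over $\mathbb{C}$, not the real inner product $\Re\,\mbox{Tr}(\overline{C}^{*}dC)$ that the identification $\mathbb{C}^{m\times n}\cong\mathbb{R}^{2mn}$ would hand you; reassembling the two real identities into the single complex bilinear identity requires the Cauchy--Riemann relations between the real partials, i.e., complex-differentiability of $f$ and $\mathcal{L}$ in the entries. The cleaner dispatch is to observe that your real-case computation is purely algebraic and runs verbatim over $\mathbb{C}$ once $f$ and $\mathcal{L}$ are assumed holomorphic (which is the setting of \cite{giles2008extended}); otherwise one must adopt Wirtinger-calculus conventions. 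This caveat is immaterial to the paper itself, which invokes the lemma only for real matrices ($U$, $h$, $C$ in the proof of Theorem 2).
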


	\begin{proof}[Proof of Theorem 2]
		Let $C = h - U T^{-1}U'h$ where $(U, h) \in \mathbb{R}^{n \times m}\times \mathbb{R}^n$ and $T = \mbox{striu}(U' U) + \frac{1}{2} \mbox{diag}(U'U)$. Notice that the matrix $T$ can be written using the Hadamard product  as follows
		\begin{align}
		\label{eq44}
		T =  B \circ (U'U),
		\end{align}
		where $B= \mbox{striu}(J_m) + \frac{1}{2} I_m$ and $J_m$ is the $m\times m$ matrix of all ones.
		
		Calculating the infinitesimal perturbations of $C$ gives 
		\begin{align*}
		dC = & (I - U T^{-1} U')dh \\ & - dU T^{-1} U' h  - U T^{-1} dU'  h \\ &  + U T^{-1} dT T^{-1} U' h.
		\end{align*}
		Using Equation \eqref{eq44} we can write 
		\begin{align*}
		dT = B \circ (dU'U + U' dU).
		\end{align*}
		
		By substituting this back into the expression of $dC$, multiplying the left and right-hand sides by $\overline{C}'$, and applying the trace we get
		\begin{align*}
		\mbox{Tr}(\overline{C}'&dC) =  \mbox{Tr}(\overline{C}'(I - U T^{-1} U')dh) \\ & - \mbox{Tr}(\overline{C}'dU T^{-1} U' h)  - \mbox{Tr}(\overline{C}'U T^{-1} dU'  h) \\ &  + \mbox{Tr}(\overline{C}'U T^{-1} (B \circ (dU'U + U' dU)) T^{-1} U' h).
		\end{align*}
		
		Now using the identity $\mbox{Tr}(AB) = \mbox{Tr}(BA)$, where the second dimension of A agrees with the first dimension of B, we can rearrange the expression of $\mbox{Tr}(\overline{C}'dC)$ as follows 
		\begin{align*}
		\mbox{Tr}(\overline{C}'&dC) =  \mbox{Tr}(\overline{C}'(I - U T^{-1} U')dh) \\ & - \mbox{Tr}(T^{-1} U' h\overline{C}'dU )  - \mbox{Tr}(h\overline{C}'U T^{-1} dU'  ) \\ &  + \mbox{Tr}(T^{-1} U' h\overline{C}'U T^{-1} (B \circ (dU'U + U' dU)) ).
		\end{align*}
		
		To simplify the expression, we will use the short notations 
		\begin{align*}
		\tilde{C} &= (T')^{-1}U' \overline{C}, \\
		\tilde{h} &= T^{-1}U' h,
		\end{align*}
		$\mbox{Tr}(\overline{C}'dC)$ becomes 
		\begin{align*}
		\mbox{Tr}(\overline{C}'dC) &=  \mbox{Tr}((\overline{C}' -\tilde{C}'U')dh) \\ & - \mbox{Tr}(\tilde{h}\overline{C}'dU )  - \mbox{Tr}(h\tilde{C}' dU'  ) \\ &  + \mbox{Tr}(\tilde{h}\tilde{C}' (B \circ (dU'U + U' dU)) ).
		\end{align*}
		
		Now using the two following identities of the trace
		\begin{align*}
		\mbox{Tr}(A') &= \mbox{Tr}(A), \\
		\mbox{Tr}(A  (B \circ C)) &= \mbox{Tr}((A \circ B')  C)),
		\end{align*}
		we can rewrite $\mbox{Tr}(\overline{C}'dC)$ as follows 
		\begin{align*}
		\mbox{Tr}(\overline{C}'dC) = & \mbox{Tr}((\overline{C}' -\tilde{C}'U')dh) \\ & - \mbox{Tr}(\tilde{h}\overline{C}'dU )  - \mbox{Tr}(h\tilde{C}' dU'  ) \\ &  + \mbox{Tr}((\tilde{h}\tilde{C}' \circ B') dU'U) \\ & + \mbox{Tr}((\tilde{h}\tilde{C}' \circ B')U' dU ).
		\end{align*}
		
		By rearranging and taking the transpose of the third and fourth term of the right-hand side we obtain 
		\begin{align*}
		\mbox{Tr}(\overline{C}'dC) = & \mbox{Tr}((\overline{C}' -\tilde{C}'U')dh) \\ & - \mbox{Tr}(\tilde{h}\overline{C}'dU )  - \mbox{Tr}(\tilde{C} h' dU  ) \\ &  + \mbox{Tr}(((\tilde{C} \tilde{h}')\circ B) U' dU) \\ & + \mbox{Tr}(((\tilde{h}\tilde{C}') \circ B')U' dU ).
		\end{align*}
		
		Factorising by $dU$ inside the $\mbox{Tr}$ we get 
		\begin{align*}
		\mbox{Tr}&(\overline{C}'dC) =  \mbox{Tr}((\overline{C}' -\tilde{C}'U')dh) - \\ & \mbox{Tr}( (\tilde{h}\overline{C}' + \tilde{C} h' - \left[(\tilde{C} \tilde{h}')\circ B+  (\tilde{h}\tilde{C}') \circ B' \right] U')  dU ).
		\end{align*}
		
		Using lemma 1 we conclude that 
		\begin{align*}
		\overline{U}  =& U \left[ (\tilde{h} \tilde{C}') \circ  B' + (\tilde{C} \tilde{h}') \circ  B \right]  - \overline{C}  \tilde{h}' - h \tilde{C}',   \\
		\overline{h} = &\overline{C} - U \tilde{C}.
		\end{align*}
	\end{proof}
	
	\begin{proof}[Sketch of the proof for Corollary 1]			
		For any nonzero complex valued vector $x \in \mathbb{C}^n$, if we chose $u = x + e^{i\theta}\norm{x} e_1 $ and $H = -e^{-i\theta}(I - 2  \frac{u u^*}{\norm{u}^2})$, where $\theta \in \mathbb{R}$ is such that $x_1=e^{i\theta}|x_1|$, we have \cite{mezzadri2006generate}
		\begin{align}
		Hx = ||x|| e_1
		\end{align}
		Taking this fact into account, a similar argument to that used in the proof of Theorem 1 can be used here. 
	\end{proof}

	\section{Algorithm Explanation}
	\label{B}
	Let $U \coloneqq (v_{i,j})_{\substack{1\leq i \leq n}}^{1 \leq j \leq m}$.  Then the element of the matrix $T\coloneqq \mbox{striu}(U' U) + \frac{1}{2} \mbox{diag} (U'U) $ can be expressed as
	\begin{align*}
	t_{i,j} = \llbracket  i\leq j \rrbracket \frac{\sum_{k=j}^n v_{k,i} v_{k,j}}{1+\delta_{i,j} },
	\end{align*}
	where $\delta_{i,j}$ is the Kronecker delta and $\llbracket \cdot \rrbracket$ is the Iversion bracket (i.e. $\llbracket     p \rrbracket $ = 1 if p is true and $\llbracket     p \rrbracket$ = 0 otherwise).
	
	In order to compute the gradients in Equations \eqref{eq5} and \eqref{eq6}. we first need to compute $\tilde{h} = T^{-1} U' h$ and $\tilde{C} = (T')^{-1}U' \frac{\partial \mathcal{L}}{ \partial C}$. This is equivalent to solving the triangular systems of equations $T \tilde{h}=U' h$ and $T' \tilde{C}=U' \frac{\partial \mathcal{L}}{\partial C}$.
	
	\textbf{Solving the triangular system $T \tilde{h}=U' h$}. For $1 \leq  k \leq m$, we can express the $k$-th row of this system as 
	\begin{align}
	t_{k,k} \tilde{h}_k&+  \sum_{j=k+1}^m t_{k,j} \tilde{h}_j = \sum_{j=k}^n v_{j,k} h_j, \nonumber  \\
	& = \sum_{j=k}^n v_{j,k} h_j - \sum_{j=k+1}^{m} \sum_{r=j}^n v_{r,k} v_{r,j} \tilde{h}_{j} , \nonumber\\
	& =\sum_{r=k}^n v_{r,k} h_r -  \sum_{r=k+1}^{n} v_{r,k} \sum_{j=k+1}^{r}  v_{r,j} \tilde{h}_j, \label{eq:93} \\
	& = U'_{*,k} (h - \sum_{j=k+1}^{m}  U_{*,j} \tilde{h}_j), \label{eq:94}
	\end{align}
	where the passage from Equation \eqref{eq:93} to \eqref{eq:94} is justified because $v_{r,j}=0$ for $j>r$. Therefore, $\sum_{j=k+1}^{r} v_{r,j}\tilde{h}_{j} = \sum_{j=k+1}^{m} v_{r,j}\tilde{h}_{j} $.
	
	\begin{table*}[t]
		\centering
		\begin{tabular}{cccc}
			\hline 
			&	Operation & Flop count & Total Flop count  \\
			&		& for iteration $k$ &   for $m$ iteration\\
			\hline 
			\multirow{2}{*}{FP}	&	$\tilde{h}_{k} \leftarrow \frac{2}{N_{k} } U_{*, k}' H_{*,k+1}$ & $2(n -k) +3$ &  \multirow{2}{*}{$(4n-m+2)m$} \\
			
			&	$H_{*,k} \leftarrow H_{*,k+1} - \tilde{h}_{k}  U_{*,k}$	& $ 2n $ & 	\\
			\hline 	
			\hline 
			\multirow{3}{*}{BP}	&	$\tilde{h}_{k} \leftarrow \frac{2}{N_{k} } U_{*, k}' H_{*,k+1}$ & $2 (n-k) + 3$ &  \multirow{3}{*}{$(7n -2m +3)m$}  \\
			&	$g\leftarrow g- \tilde{C}_{k}  U_{*,k}$	& $ 2(n-k+1) $ & 	\\
			&		$G_{*, k} \leftarrow - \tilde{h}_k g - \tilde{C}_k H_{*, k+1}$ & $3n$  & \\
			\hline 		
		\end{tabular}
		\caption{Time complexities of different operations in algorithm \ref{alg}. It is assumed that the matrix $U\in \mathbb{R}^{n\times m}$ is defined as in Equation \eqref{eq101}.}
		\label{TC-BP-alg}
	\end{table*}
	
	By setting $H_{*,k+1} \coloneqq h - \sum_{j=k+1}^{m}  U_{*,j}\tilde{h}_j$, and noting that $t_{k,k} = \frac{U'_{*,k} U_{*,k}}{2}$, we get 
	\begin{align}
	\tilde{h}_k &= \frac{2}{U'_{*,k} U_{*,k}}  U'_{*,k} H_{*, k+1}, \label{eq:91}\\
	H_{*, k} &= H_{*,k+1} - \tilde{h}_k U_{*,k}. \label{eq:92}
	\end{align}
	
	Equations \eqref{eq:91} and \eqref{eq:92} explain the lines 8 and 9 in Algorithm \ref{alg}. Note that $H_{*,1} = h - \sum_{j=1}^{m}  U_{*,j} \tilde{h}_{j} = h - \sum_{j=1}^m U_{*,j} [T^{-1}U' h]_j = h - U T^{-1} U' h = W h$. Hence, when $h = h^{(t-1)}$, we have $H_{*,1} = C^{(t)}$, which explains line 16 in Algorithm \ref{alg}.
	
	\textbf{Solving  the triangular system $T' \tilde{C} = U' \frac{\partial  \mathcal{L}}{\partial C}$}. Similarly to the previous case, we have for $1 \leq k \leq m$
	\begin{align}
	t_{k,k} \tilde{C}_k &+ \sum_{j=1}^{k-1} t_{j,k} \tilde{C}_j = \sum_{j=k}^n v_{j,k}\left[\frac{\partial \mathcal{L}}{\partial C}\right]_j, \nonumber \\
	& = \sum_{j=1}^n v_{j,k}\left[\frac{\partial \mathcal{L}}{\partial C}\right]_j -  \sum_{j=1}^{k-1}  \sum_{r=k}^{n}  v_{r,j} v_{r,k}  \tilde{C}_j, \label{eq:88} \\
	& = \sum_{r=1}^n v_{r,k}\left[\frac{ \partial\mathcal{L}}{\partial C}\right]_r - \sum_{r=1}^{n} v_{r,k} \sum_{j=1}^{k-1} v_{r,j}  \tilde{C}_j, \label{eq:89} \\ 
	& = U'_{*,k} \left(\frac{\partial \mathcal{L}}{\partial C}  -  \sum_{j=1}^{k-1}   U_{*, j} \tilde{C}_j\right), \nonumber
	\end{align}
	where the passage from Equation \eqref{eq:88} to \eqref{eq:89} is justified by the fact that $\sum_{r=k}^{n}  v_{r,j} v_{r,k}  \tilde{C}_j = \sum_{r=1}^{n}  v_{r,j} v_{r,k}  \tilde{C}_j$ (since $v_{r,k}=0$ for $r<k$).
	
	By setting $g \coloneqq \frac{\partial \mathcal{L}}{\partial C^{(t)}}  -  \sum_{j=1}^{k-1}  U_{*, j} \tilde{C}_j $, we can write $\tilde{C}_k = \frac{2}{U'_{*,k} U_{*,k}}U_{*,k}' g$ which explains the lines 12 and 13 in Algorithm \ref{aa}. Note also that after $m$-iterations in the backward propagation loop in Algorithm \ref{alg}, we have $g = \frac{\partial \mathcal{L}}{\partial C^{(t)}}  -  \sum_{j=1}^{m}   U_{*, j}\tilde{C}_j = \frac{\partial \mathcal{L}}{\partial C^{(t)}}  -  U \tilde{C}= \frac{\partial \mathcal{L}}{ \partial h^{(t-1)} }$. This explains line 17 of Algorithm \ref{alg}.
	
	Finally, note that from Equation \eqref{eq5}, we have for $1 \leq i \leq n$  and $1 \leq  k  \leq m$ 
	\begin{align*}
	\left[\frac{\partial \mathcal{L}}{\partial U}\right]_{i,k} = &-  \left[ \frac{\partial \mathcal{L}}{ \partial C}\right]_{i} \tilde{h}_k - h_i \tilde{C}_k + \\   & \sum_{j=1}^{m} v_{i,j}  \left( ((\tilde{h}\tilde{C}') \circ B' )_{j,k} +  ( (\tilde{C}\tilde{h}') \circ B )_{j,k} \right), \\
	= & -  \left[ \frac{\partial \mathcal{L}}{ \partial C}\right]_{i} \tilde{h}_k - h_i \tilde{C}_k  + \\  
	& \sum_{j=1}^{m} v_{i,j}  \left( \tilde{h}_j\tilde{C}_k \frac{\llbracket     k\leq j \rrbracket}{1 + \delta_{j,k}} +  \tilde{C}_j \tilde{h}_k \frac{ \llbracket     j\leq k \rrbracket}{1 + \delta_{j,k}} \right),  \\
	= &  -  \left[ \frac{\partial \mathcal{L}}{ \partial C}\right]_{i} \tilde{h}_k - h_i \tilde{C}_k  + \\
	& \sum_{j=1}^{m} v_{i,j}  \left( \tilde{h}_j\tilde{C}_k\llbracket     k< j \rrbracket + \tilde{C}_j  \tilde{h}_k \llbracket     j\leq k \rrbracket\right), \\
	= & \tilde{C}_k \left(\sum_{j=k+1}^{m} v_{i,j} \tilde{h}_j  - h_i\right) \\ & +  \tilde{h}_k \left(\sum_{j=1}^{k} v_{i,j} \tilde{C}_j -\left[ \frac{\partial \mathcal{L}}{ \partial C}\right]_{i}\right).
	\end{align*}
	
	Therefore, when $C = C^{(t)}$ and $h=h^{(t-1)}$ we have 
	\begin{align*}
	\left[\frac{\partial \mathcal{L}}{\partial U^{(t)}}\right]_{*,k} = - \tilde{C}_k H_{*, k+1} - \tilde{h}_k g,
	\end{align*}
	where $g = \frac{\partial \mathcal{L}}{\partial C^{(t)}}  -  \sum_{j=1}^{k-1} \tilde{C}_j  U_{*, j}$. This explains lines 14 and 18 of Algorithm \ref{alg}.
	
	\section{Time complexity}
	\label{C}	
	Table \ref{TC-BP-alg} shows the flop count for different operations in the local backward and forward propagation steps in Algorithm \ref{alg}.

	\section{Matlab implementation of Algorithm \ref{alg}}
	\label{D}
	\begin{figure}[h]
\begin{lstlisting}[mathescape=true]
% Inputs: U - matrix of reflection vectors
%		 h - hidden state at time-step t-1
%		 BPg - Grad of loss w.r.t C=Wh
% Outputs: g, G, C=Wh
[n, m] = size(U);
G=zeros(n, m); H = zeros(n, m+1);
N = zeros(m); h_tilde = zeros(m);
%  Zero-initialisation not required above!
H(:,m+1)=h; g=BPg;
%%--Forward propagation--%%	
for k =0:m-1  
	N(m-k) = U(:, m-k)' * U(:, m-k);
	h_tilde(m-k)=2 / N(m-k) * ... 
			U(:, m-k)' * H(:,m-k+1);
	H(:,m-k)=H(:,m-k+1) - ...
			h_tilde(m-k) * U(:,m-k);
end	
C = H(:,1)
%%--Backward propagation--%%
for k=1:m 
	c_tilde_k = 2*U(:,k)' * g / N(k);
	g = g - c_tilde_k * U(:,k);
	G(:, k)=-h_tilde(k) * g - ...
			c_tilde_k*H(:,k+1);
end
		\end{lstlisting}
		\caption{MATLAB code performing one-step FP and BP required to compute $C^{(t)}$, $	\frac{\partial \mathcal{L}}{\partial h^{(t-1)}}$ (variable \lstinline{g} is the code), and  $\frac{\partial \mathcal{L}}{\partial U^{(t)}}$ (variable \lstinline{G} is the code). The required inputs for the FP and BP are, respectively, the tuples $(U, h^{(t-1)})$ and $(U, C^{(t)}, \frac{\partial \mathcal{L}}{\partial C^{(t)}})$. Note that $\frac{\partial \mathcal{L}}{\partial C^{(t)}}$ is variable \lstinline{BPg} in the Matlab code.}
		\label{eee}
	\end{figure}

\end{appendix}
	
\end{document}